% new_TLP2egui.tex / guide for TLP
% v2.12, released 23-apr-2003
%   (based on JFP2egui.tex v1.01) and tlp2egui.tex
% Copyright (C) 2000,2001,2002,2003, 2012 Cambridge University Press

\NeedsTeXFormat{LaTeX2e}

\documentclass{new_tlp}
\usepackage{mathptmx}

%%% Macros for the guide only %%%
%\hyphenation{either}
%\providecommand\AMSLaTeX{AMS\,\LaTeX}
%\newcommand\eg{\emph{e.g.}\ }
%\newcommand\etc{\emph{etc.}}
%\newcommand\bcmdtab{\noindent\bgroup\tabcolsep=0pt%
%  \begin{tabular}{@{}p{10pc}@{}p{20pc}@{}}}
%\newcommand\ecmdtab{\end{tabular}\egroup}
%\newcommand\rch[1]{$\longrightarrow\rlap{$#1$}$\hspace{1em}}
%\newcommand\lra{\ensuremath{\quad\longrightarrow\quad}}

%%% PACKAGES
% Our Customizations
\usepackage{times}
\usepackage{helvet}
\usepackage{courier}
\usepackage{url}
\usepackage{amsfonts,amsmath}
\usepackage{latexsym}
\usepackage{color}
\usepackage{url}
\usepackage{float}
\usepackage{multicol,multirow,caption}
\usepackage{listings}
\usepackage{xspace,epsfig}
\usepackage{color,xcolor}

% DEFINITIONS

\newtheorem{theorem}{Theorem}
\newtheorem{corollary}{Corollary}

\def\lar{\leftarrow}
\def\ba{\begin{array}}
\def\ea{\end{array}}
\def\bi{\begin{itemize}}
\def\ei{\end{itemize}}
\def\be{\begin{enumerate}}
\def\ee{\end{enumerate}}
\def\beq{\begin{equation}}
\def\eeq#1{\label{#1}\end{equation}}
\def\no{\ii{not}}
\def\ii#1{\hbox{\sl #1\/}}

\def\eqs{\,{=}\,}

\def\leqs{\,{\leq}\,}
\def\geqs{\,{\geq}\,}
\def\lts{\,{<}\,}
\def\gts{\,{>}\,}
\def\ins{\,{\in}\,}
\def\cups{\,{\cup}\,}

\def\clingo{{\sc Clingo}\,}
\def\Reg{\textbf{\textit{Reg}}\,}
\def\Regs{\textbf{\textit{Reg*}}\,}

\def\3ncdc{\textsc{3D-nCDC-ASP}\,}
\def\2ncdc{\textsc{nCDC-ASP}\,}

\usepackage{todonotes}

%%% END Article customizations

  \title[Theory and Practice of Logic Programming]
        {Reasoning about Cardinal Directions between\\3-Dimensional Extended Objects\\using Answer Set Programming}

  \author[Izmirlioglu and Erdem]{
         Yusuf Izmirlioglu and Esra Erdem\\
         Sabanci University, Faculty of Engineering and Natural Sciences, 34956 Istanbul, Turkey\\
         \{yizmirlioglu,esra.erdem\}@sabanciuniv.edu}

%\jdate{March 2003}
%\pubyear{2003}
%\pagerange{\pageref{firstpage}--\pageref{lastpage}}
%\doi{S1471068401001193}

\begin{document}
%\nocite{*}% includes all entries of BibTeX database into the list of references.

\label{firstpage}

\maketitle

  \begin{abstract}
We propose a novel formal framework (called \3ncdc) to represent and reason about cardinal directions between extended objects in 3-dimensional (3D) space, using Answer Set Programming~(ASP). \3ncdc extends Cardinal Directional Calculus (CDC) with a new type of default constraints, and \2ncdc to 3D. \3ncdc provides a flexible platform offering different types of reasoning:  Nonmonotonic reasoning with defaults, checking consistency of a set of constraints on 3D cardinal directions between objects, explaining inconsistencies, and inferring missing CDC relations. We prove the soundness of \3ncdc, and illustrate its usefulness with applications. This paper is under consideration for acceptance in TPLP.
  \end{abstract}

  \begin{keywords}
   Qualitative Spatial Reasoning, Answer Set Programming, Cardinal Directional Calculus, 3D Space, Consistency Checking, Marine Exploration, Building Design, Digital Forensics
  \end{keywords}

%\tableofcontents

%%%%%%%%%%%%%%%%%%%%%%%%%%%%%%%%%%%%%%%%%%%%%%%%%%%%%%%%%%

%\vspace{-2ex}
\section{Introduction}

Qualitative spatial reasoning studies representation and reasoning with different aspects of space, such as direction, distance, size %shape,
 using parts of natural language %inexact, imprecise terms of human language
 rather than quantitative data.
Qualitative models are useful in contexts where quantitative data is not available due to incomplete knowledge or uncertainty.
Examples are exploration of an unknown territory such as disaster rescue, marine habitat discovery and underwater archeology.

Qualitative reasoning is also relevant for contexts with complete information and quantitative data because human agents tend to express spatial relation or configuration by means of qualitative terms for the sake of sociable and convenient communication. For instance, while designing a building, it is more intuitive and understandable to describe the location of the transformer room as follows: ``The transformer room must be at the rear side of the building, near the electric panel. It should be located on a lower level than the entrance." In this case, a quantitative description may be too complicated or even not possible.

Most qualitative calculi and reasoning mechanisms have been developed for objects in  1-dimensional (1D) or 2-dimensional (2D) space, as described in the surveys~\cite{chen2015survey,dylla2017survey}.
However, in real environments, agents move and explore in all 3 dimensions or deal with complex 3-dimensional (3D) objects.
In this paper, we study representation of and reasoning about qualitative directions in 3D space.  We consider 3D cardinal directions (e.g., to the north and above, the east and below, to the southwest and on the same level) as in the related studies~\cite{chen2007cardinal,hou2016reasoning}, which extend Cardinal Directional Calculus (CDC)~\cite{Goyal1997,SkiadKoub2004,SkiadKoub2005} to 3D space. Different from these studies, instead of blocks (rectangular prism shape objects), we consider 3D objects of arbitrary shapes, that may be disconnected.

In CDC, cardinal directions between objects are represented by formulas called CDC constraints. In our study, to incorporate commonsense knowledge into reasoning, we introduce a new type of constraint (called {\em default 3D constraint}) to represent default relations (e.g., the garage is by default below and to the north of the entrance in a building). We call this extended version of 3D-CDC as {\em 3-dimensional nonmonotonic CDC (3D-nCDC)}.

One of the central problems in 3D CDC literature is the consistency checking of a set of 3D CDC constraints. Informally, this problem is concerned about the existence of a possible configuration of objects with respect to the given CDC constraints. We study consistency checking in 3D-nCDC, and provide a general solution that is not restricted to tractable cases as in related work.
In addition to consistency checking, we consider other forms of reasoning important for various real-world applications: nonmonotonic reasoning, explaining inconsistencies, and inferring missing CDC relations between objects.

We propose a formal framework to represent 3D-nCDC constraints and to reason about them, using the logic programming paradigm Answer Set Programming (ASP)~\cite{MarekT99,Niemelae99,Lifschitz02}, based on the answer set semantics~\cite{GelfondL88,gelfondL91}. For that reason, we call this framework as \3ncdc.  We show the soundness and completeness of \3ncdc, implement it using the ASP language ASP-Core-2~\cite{calimeri2020asp} and the ASP solver \clingo~\cite{GebserKKOSS11}, and show interesting applications in marine exploration using underwater robots, building design and regulation, and evidence-based digital forensics.
Proofs are provided in Appendix.
% of the extended version~\cite{izmirlioglu2020}.
%: {\small\url{https://www.dropbox.com/s/p472j1mwdpol07n/Appendix_3D_nCDC.pdf?dl=0}.}

%Proofs are provided in the online Appendix included in Supplementary Material.
%: {\small\url{https://www.dropbox.com/s/racwjof41986nec/Appendix_3D_nCDC_v4.pdf?dl=0}}

%%%%%%%%%%%%%%%%%%%%%%%%%%%%%%%%%%%%%%%%%%%%%%%%%%%%%%%%%%
%\vspace{-2ex}
\section{Related Work}\label{sec:related}

Cardinal directions in 3D have been studied in the literature for blocks, by directly extending CDC to 3D space~\cite{chen2007cardinal,hou2016reasoning},
by utilizing projections of objects into 1D~\cite{pais2000spatial} or 2D~\cite{li2009qualitative}, or
%For instance, \citeN{pais2000spatial} express each 3D direction via three 1D cardinal directions, \ii{left}, \ii{right}, \ii{coincident},
%\citeN{li2009qualitative} express each 3D direction via eight 2D cardinal directions,
%The block algebra in 3D~\cite{balbiani1999tractable,BalbianiCC02} express each 3D direction
in terms of the 13 relations of Interval Algebra~\cite{Allen83} as in the block algebra~\cite{balbiani1999tractable,BalbianiCC02}. In our study, we understand 3D cardinal directions as in the related studies~\cite{chen2007cardinal,hou2016reasoning}, instead of combinations of lower-dimensional relations that may not be directional. Different from these studies: (i) instead of blocks, we consider 3D objects of arbitrary shapes, that may be disconnected, (ii) to incorporate commonsense knowledge into reasoning, we introduce default 3D constraints to represent default relations. An example that illustrates the strengths of adopting directly a 3D calculus instead of projecting it to lower dimensions is available in~\ref{sec:proj-3d}.

One of the central problems studied in 3D CDC is the consistency checking of a set of 3D CDC constraints. Polynomial time algorithms have been introduced by~\citeN{chen2007cardinal} and~\citeN{hou2016reasoning} for consistency checking in 3D CDC under the condition that constraints are basic (i.e., not disjunctive).
Different from these studies: (iii) we study the consistency checking problem in 3D-nCDC and provide a general solution, but without restricting it to the tractable cases, (iv) we also consider other forms of reasoning important for various real-world applications: nonmonotonic reasoning, explaining inconsistencies, and inferring missing CDC relations between objects, and (v) we propose a formal framework (called \3ncdc) to represent 3D-nCDC constraints and to reason about these constraints, using ASP.

ASP has been applied to solve the consistency checking problem in 1D and 2D qualitative calculi. For instance, \citeN{brenton2016answer} represent Region Connection Calculus with eight base relations (RCC-8)~\cite{cohn1997qualitative}, \citeN{walega2017non} represent RCC-5~\cite{Randell1992ASL}, and \citeN{baryannis2018trajectory} represent Trajectory Calculus~\cite{baryannis2018trajectory} in ASP.
Like \cite{brenton2016answer} and \cite{baryannis2018trajectory}, we utilize the ASP language ASP-Core-2 and the ASP solver \clingo; \cite{walega2017non} utilizes ASPMT language, %the translator \textsc{aspmt2smt}~\cite{BartholomewL14},
and the SMT solver Z3~\cite{deMoura2008}. Different from these studies: (a) we consider a 3D qualitative calculus, and extend it with new types of default constraints whose semantics is provided by means of the nonmonotonic constructs of ASP. Furthermore, (b) we consider not only consistency checking but also other reasoning problems mentioned above.

\3ncdc extends our earlier work \2ncdc~\cite{izmirlioglu018}, which investigates nonmonotonic CDC in 2D using ASP, to 3D.
We represent 3D cardinal directions between 3D extended objects, perform consistency checks of 3D-nCDC constraints, and generate missing 3D cardinal direction relations between objects.
Our representation of 3D-nCDC constraints is methodologically different, enabling generation of explanations for inconsistencies, and enabling a more general definition of default CDC constraints.

Qualitative directional relations in 3D are used in robotics. For instance, \citeN{zampogiannis2015learning} define six directional relations (i.e., \ii{left}, \ii{right}, \ii{front}, \ii{behind}, \ii{below}, \ii{above}) between point clouds in 3D by utilizing cones, for the purpose of grounding. However, such related work in robotics do not study reasoning problems, like consistency checking or inference of (missing) relations (e.g., compositions or inverses), in the spirit of the well-studied qualitative spatial reasoning calculi. The lack of formal studies on such reasoning problems might lead to incorrect conclusions. For instance, based on \citeN{zampogiannis2015learning}'s directional relations, \citeN{mota2018incrementally} further define \ii{above} as an inverse of \ii{below} by an ASP rule and rely on it for further inferences. However, according to the definitions of directional relations in these studies, it is not always correct that, for every two objects $A$ and $B$, $A$ is \ii{below} $B$ iff $B$ is \ii{above} $A$ (see~\ref{sec:inverse} for a counter example).
%; relying on this incorrect definition violates soundness.
On the other hand, \3ncdc (1) stems from a qualitative spatial calculus of 3D CDC, where computational aspects are well-studied, (2) extends 3D CDC further to 3D-nCDC with nonmonotonic constructs and considering other automated reasoning problems (like inferring missing relations and explanation generation), (3) is sound and complete (Corollary~\ref{cor:correct-basic-3d}), and (4) provides a computational tool to automate reasoning about 3D cardinal directions. In that sense, \3ncdc provides a provably correct method and tool that robotics studies can benefit from.

We have summarized the similarities and differences of our contributions above in comparison with the closely related work in qualitative spatial reasoning about 3D cardinal directional relations (i)--(v), and in applications of ASP to qualitative spatial reasoning, including our earlier studies (a)--(d). We have also discussed related studies about qualitative spatial relations in robotics, and the further needs in robotics for  qualitative spatial reasoning by emphasizing the significance of our contributions (1)--(4).

Further differences from the related work and our earlier work will be pointed out as we provide details about \3ncdc.

%%%%%%%%%%%%%%%%%%%%%%%%%%%%%%%%%%%%%%%%%%%%%%%%%%%%%%%%%%
%\vspace{-2ex}
\section{3D-nCDC: Nonmonotonic Cardinal Direction Calculus in 3-Dimensional Space}\label{sec:3d-ncdc}

Cardinal Directional Calculus (CDC)~\cite{Goyal1997,SkiadKoub2004,Liuetal2010} describes qualitative direction of an extended spatial object $a$ (the primary or target object) with respect to another object $b$ (the reference object) on a plane, in terms of cardinal directions as follows.
The minimum bounding rectangle of a region $b$, denoted $mbr(b)$, is the smallest rectangle that contains $b$ and has sides parallel to the x and y axes. The minimum bounding rectangle of the reference object $b$ divides the plane into nine regions (called tiles) and these tiles define the nine cardinal directions relative to $b$: \textit{north} (N), \textit{south} (S), \textit{east} (E), \textit{west} (W), \textit{northeast} (NE), \textit{northwest} (NW), \textit{southeast} (SE), \textit{southwest} (SW), \textit{on} (O), as illustrated in Fig.~\ref{fig:relations-2d-3d}(i).
After identifying the unique tiles ${R_1}(b),...,{R_k}(b)$ ($1\leq k\leq 9)$ occupied by the primary object $a$, the direction of $a$ with respect to $b$ is expressed by the basic CDC relation $R_1{:} R_2{:} ...{:} R_k$.

\begin{figure}[t]   % t!
    %\centering
    %\includegraphics[width=\textwidth]{cdc2d_3drel_box_v5.png}
    \begin{tabular}{ccc}
    \includegraphics[width=0.25\textwidth]{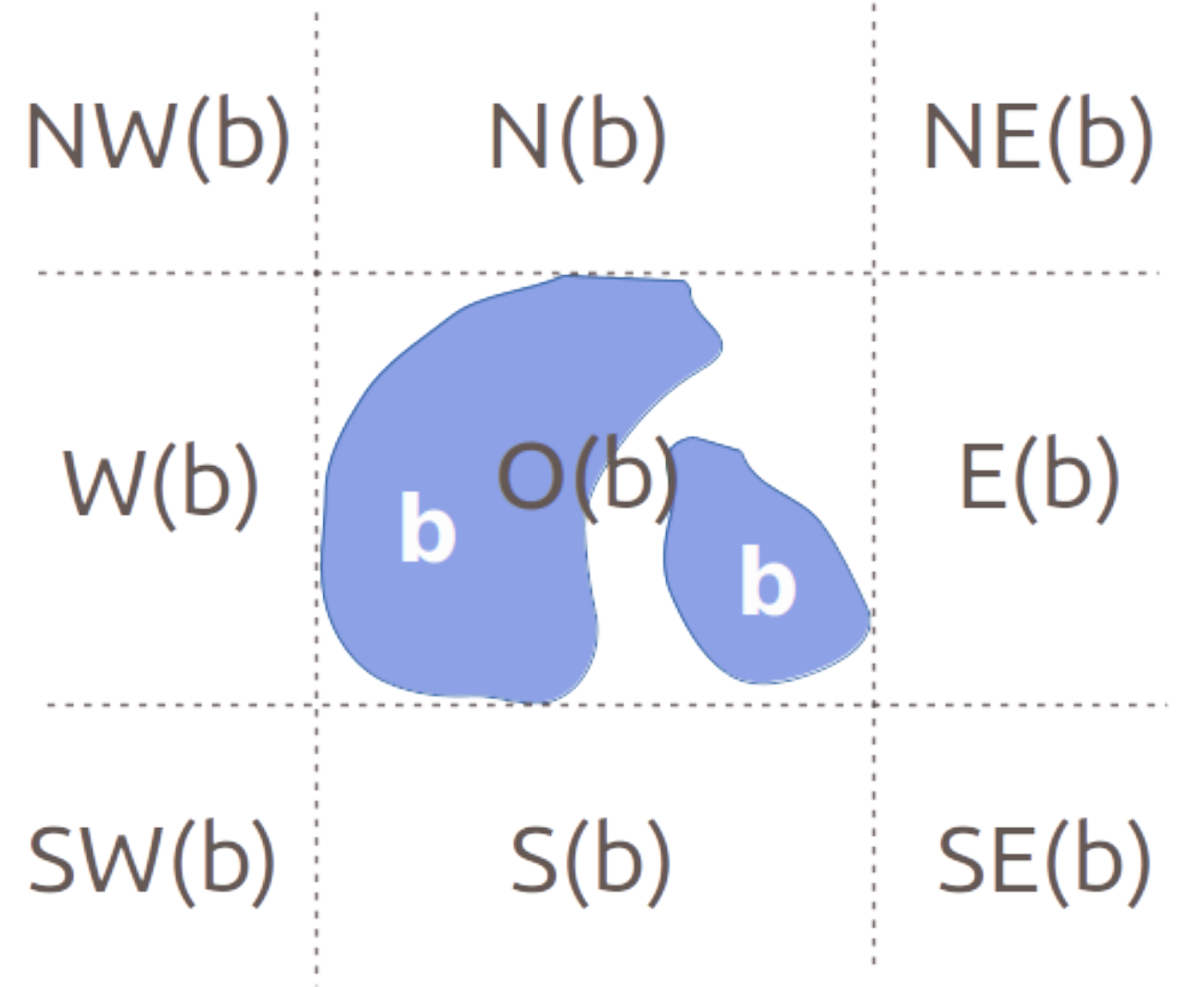} &
    \includegraphics[width=0.3\textwidth]{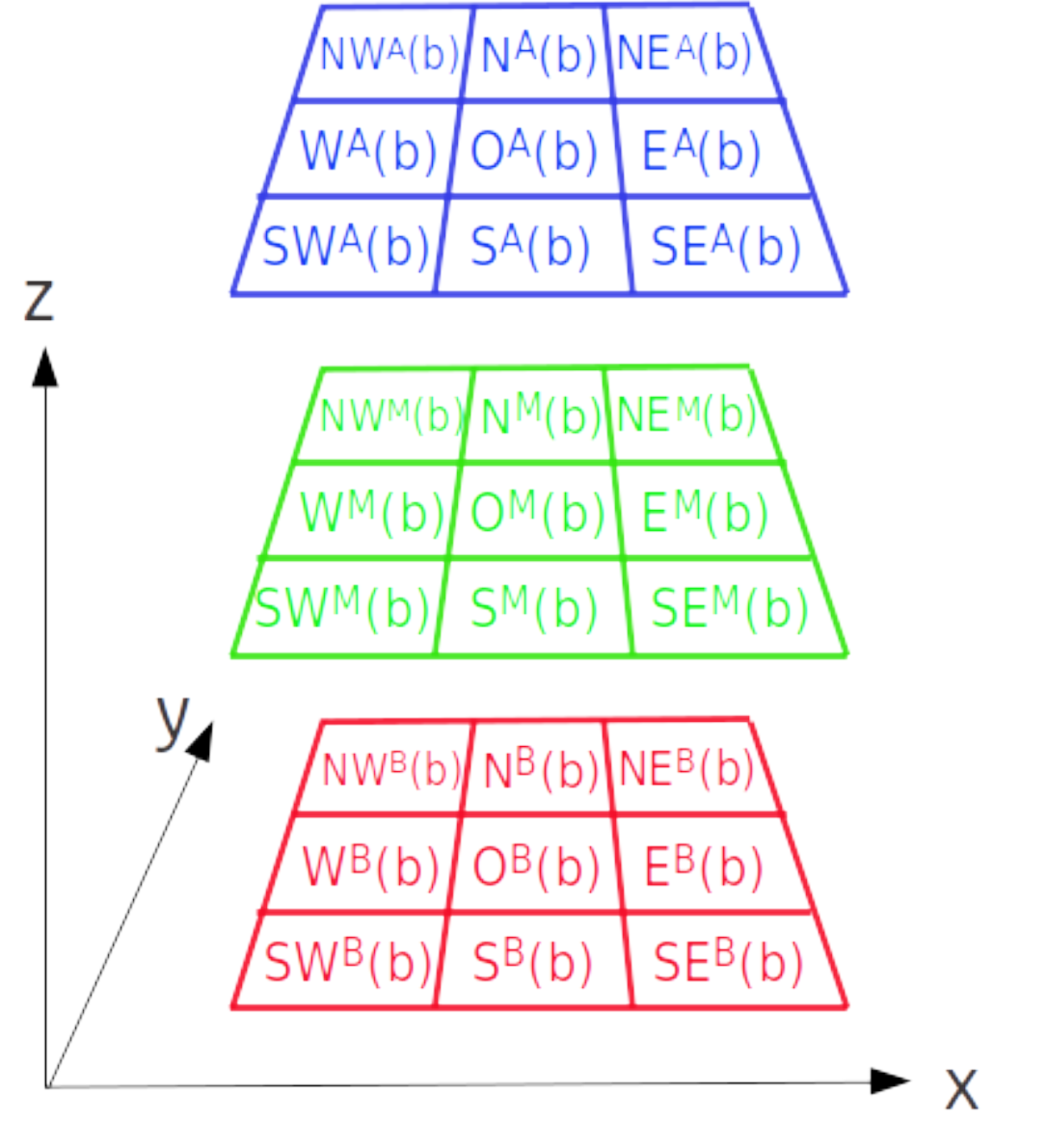} &
    \begin{minipage}[b]{0.3\textwidth}
      \centering\includegraphics[width=0.35\textwidth]{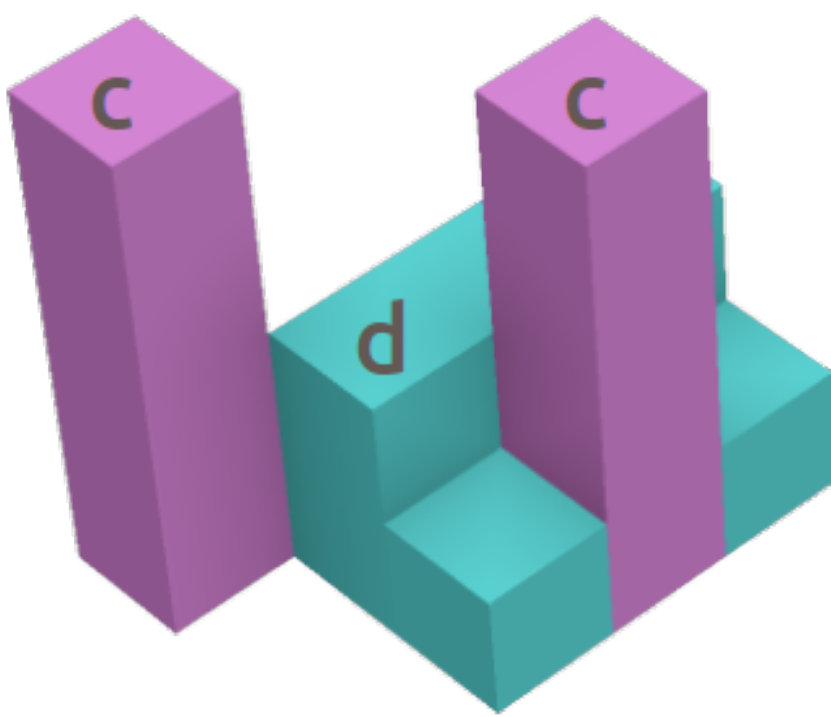} \\
      \centering(iii) \\
      \vspace{0.2cm}
      \centering\includegraphics[width=0.35\textwidth]{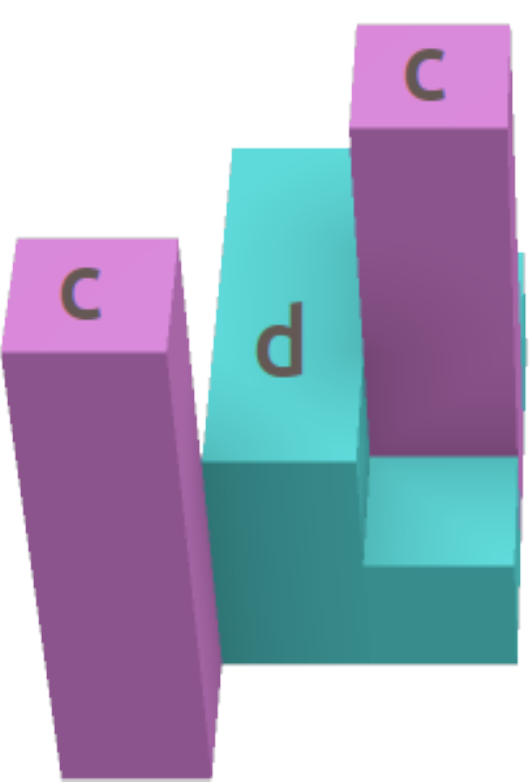}
    \end{minipage}\\
    (i) & (ii) & (iv)
    \end{tabular}
    \vspace{-\baselineskip}
    \caption{\scriptsize{(i) The minimum bounding rectangle of a region $b$, and the 9 single-tiles on the plane relative to $b$. (ii) The 27 single-tiles in 3D relative to object $b$. (iii) Two spatial objects $c$ and $d$. (iv) The spatial objects of (iii) are axes-aligned. The direction of $c$ with respect to $d$ in 3D is represented by the multi-tile 3D nCDC relation $O^{M}:O^{A}:SW^{M}:SW^{A}$.}}
    \label{fig:relations-2d-3d}
    \vspace{-\baselineskip}
\end{figure}

\noindent{\underline{Spatial objects and relations in 3D-nCDC}}
Our study relies on two extensions of CDC to 3D space: Three-dimensional Cardinal Direction (TCD) calculus~\cite{chen2007cardinal}, and Block Cardinal Direction (BCD) calculus~\cite{hou2016reasoning}.
TCD and BCD consider spatial objects that are blocks in 3D space. Different from TCD and BCD, we consider {\em spatial objects} as nonempty, regular, compact volumes in $\mathbb{R}^3$. Spatial objects have positive volume, so lower dimensional entities such as points, lines, surfaces are not considered in 3D nCDC.
A subset of $\mathbb{R}^3$ is {\em regular} if it is equal to closure of its interior; regular objects do not have isolated singular points or emanating lines or planes.
A set is connected if it cannot be stated as union of two disjoint nonempty closed sets.
An object is {\em connected} if its interior is a connected set (so trivial cases where an object has two separate components which touch on a mere single point or line are excluded); connected objects might have holes inside.
An object that is not connected is called {\em disconnected}.
A {\em possibly disconnected object} is a union of finite number of connected objects.
Let $\Reg$ and $\Regs$ denote the set of connected and possibly disconnected objects in $\mathbb{R}^3$, respectively.

Since we consider spatial objects of arbitrary shapes, we describe the direction of a target object $a$ with respect to a reference object $b$, by identifying the minimum bounding box of $b$.
%The projection of an object $b$ on the x-axis is defined as the set of the x-coordinates of all points in $b$.
Let $\inf_x(b)$ and $\sup_x(b)$ denote the infimum and supremum of the projection of object $b$ on the x-axis. Similarly, the projections of $b$ on the y and z axes are described by $\inf_y(b)$, $\sup_y(b)$, $\inf_z(b)$, $\sup_z(b)$.
We define the {\em minimum bounding box (mbb)} of an object $b$ as a prism whose sides are described by six planes: $x = \inf_x(b)$, $x = \sup_x(b)$, $y = \inf_y(b)$, $y = \sup_y(b)$, $z = \inf_z(b)$, and $z = \sup_z(b)$. Therefore, the $mbb(b)$ of an object $b$ divides the space into 27 tiles: $NW^{A}(b),...,SE^{A}(b), NW^{M}(b),...,SE^{M}(b), NW^{B}(b),...,SE^{B}(b)$ as illustrated in Fig.~\ref{fig:relations-2d-3d}(ii). Here, the superscripts  $A$, $M$ and $B$ denote three levels on the z-axis: \textit{above}, \textit{middle}, \textit{below}. For example, $N^{B}(b)$ is the tile below and to the north of $b$, and consists of the coordinates $(x,y,z)\in \mathbb{R}^3$ where $\inf_x(b) < x < \sup_x(b)$, $y > \sup_y(b)$, $z < \inf_z(b)$. Note that the tiles are open sets and do not include their boundary points. In TCD and BCD, the objects are already blocks, so $mbb(b)=b$.

As in TCD and BCD, a {\em basic 3D-nCDC relation} $a\ R_1{:}R_2{:}...{:}R_k\ b$ holds if and only if $a \cap {R_i}(b) \neq \emptyset$ for every $1\leq i\leq k$.
For example,  in Fig.~\ref{fig:relations-2d-3d}(iii) (that is axes-aligned in (iv)), $c\ O^{M}:O^{A}:SW^{M}:SW^{A}\ d$. If $k=1$, this basic CDC relation is called a {\em single-tile relation}; if $k\geq 2$, it is called a {\em multi-tile relation}. Let us denote by $\mathcal{R}^s$ the set of single-tile relations, and by $\mathcal{R}$ the set of basic 3D-nCDC relations over $\Regs$.

As in BCD, a {\em disjunctive 3D-nCDC relation} is a finite set $\delta \eqs \{\delta_1,...,\delta_o\}$, ($o>1$) of basic 3D-nCDC relations, intuitively describing their exclusive disjunction. TCD does not consider disjunctive relations.
A {\em 3D-nCDC relation} can be basic or disjunctive.

\noindent{\underline{Basic/disjunctive 3D-nCDC constraints}}
A formula of the form $u\ \delta\ v$, where $u$ and $v$ are spatial variables and $\delta$ is a 3D-nCDC relation, is called a {\em 3D-nCDC constraint}.

A {\em 3D-nCDC constraint network} $C$ is a set of 3D-nCDC constraints $v_i\ \delta\ v_j$, ($v_i\neq v_j$) defined by  a set $V$ of spatial variables $(v_1,...,v_l)$ where variables range over a domain $D$ of spatial objects in $\mathbb{R}^3$, and a set $Q$ of 3D-nCDC relations $\delta$, such that, for every pair $(u_i,u_j)$ of variables in $V$, at most one 3D-nCDC constraint is included in $C$.

A {\em basic 3D-nCDC (constraint) network} consists of solely basic 3D-nCDC constraints.
A basic 3D-nCDC network $C$ is {\em complete} if it includes a unique 3D-nCDC constraint for every pair $(v_i,v_j)$, $i\neq j$ of variables in $V$; otherwise, $C$ is {\em incomplete}.

\noindent{\underline{Consistency checking}}
A pair $(a,b)$ of spatial objects {\em satisfies} a basic 3D-nCDC constraint $u\ \delta\ v$ if $a\ \delta\ b$ holds.
A pair $(a,b)$ of spatial objects {\em satisfies} a disjunctive 3D-nCDC constraint $u\ \delta\ v$ where $\delta=\{\delta_1,...,\delta_o\}$, if $a\ \delta_i\ b$ holds for exactly one $\delta_i \in \delta$.

Let $C$ be a 3D-nCDC network that consists of basic or disjunctive 3D-nCDC constraints specified by variables in $V\eqs\{v_1,...,v_l\}$.
A {\em solution} for $C$ is a set of l-tuples $(a_1, a_2, ..., a_l)$ of spatial objects in $D$ such that every constraint $v_i\ \delta\ v_j$ in $C$ is satisfied by the corresponding pair $(a_i,a_j)$ of spatial objects. If $C$ has a solution then it is called {\em consistent}.

The {\em consistency checking problem} $I\eqs(C,V,D,Q)$ in 3D-nCDC, decides the consistency of $C$.

\begin{theorem}\label{th:complexity-3d}
If $C$ is an incomplete basic 3D-nCDC network, or $C$ is a 3D-nCDC network that includes disjunctive 3D-nCDC constraints over $D \eqs \Regs$, then $I\eqs(C,V,D,Q)$ is an NP-complete problem.
\end{theorem}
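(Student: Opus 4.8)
The plan is to prove NP-completeness in two parts: membership in NP and NP-hardness. For membership, I would argue that a consistency checking instance $I\eqs(C,V,D,Q)$ can be verified in polynomial time given a suitable certificate. The key obstacle is that the domain $D \eqs \Regs$ consists of arbitrary (possibly disconnected) regions in $\mathbb{R}^3$, so an actual geometric solution is not a finite object one can guess directly. To handle this, I would first establish a \emph{finite witness lemma}: whenever a network has a solution, it has a solution whose objects can be described combinatorially. Concretely, for a disjunctive network I would guess, for each constraint $v_i\ \delta\ v_j$, a single disjunct $\delta_i \ins \delta$; this reduces the disjunctive instance to a basic one of polynomial size. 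Then I would invoke the polynomial-time consistency-checking algorithms for basic networks cited earlier (\citeN{chen2007cardinal}, \citeN{hou2016reasoning}) as the verification step — but since those apply to blocks/basic CDC, I must argue that a basic 3D-nCDC network over $\Regs$ is consistent iff the corresponding basic network is consistent in their sense, which should follow because the mbb construction is identical and arbitrary shapes give at least as much freedom as blocks for satisfying basic single/multi-tile constraints. Verifying that the guessed disjuncts yield a consistent basic network then takes polynomial time, placing the problem in NP.

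For NP-hardness, the plan is to reduce from a known NP-complete problem. The natural candidate is the consistency problem for disjunctive CDC in 2D, which is already known to be NP-complete (and similarly the incomplete basic case), so I would design a reduction that embeds a 2D disjunctive CDC network into a 3D-nCDC network. First I would take an arbitrary 2D disjunctive CDC instance and map each of the nine planar tiles to a corresponding 3D tile by fixing everything to the \emph{middle} ($M$) level on the z-axis — i.e., $N \mapsto N^{M}$, $O \mapsto O^{M}$, and so on — so that planar relations lift verbatim to 3D relations confined to the middle layer. I would then argue the lifted 3D instance is consistent iff the original 2D instance is, since flattening a 3D solution to its z-projection recovers a 2D solution and, conversely, extruding a 2D solution into a thin slab at the middle level yields a 3D solution respecting all the $M$-level tile constraints. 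This reduction is clearly polynomial, giving NP-hardness. Alternatively, if a direct reduction from a more standard problem (such as graph 3-colorability or 3-SAT) is cleaner, I would encode variable choices via the disjunctions and enforce clause/edge constraints via the tile relations, but the embedding of the already-hard 2D disjunctive case is the most economical route.

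The main obstacle I anticipate is the membership direction, specifically justifying that a polynomially bounded certificate suffices despite the uncountable domain $\Regs$. The crux is the claim that consistency of a basic 3D-nCDC network over arbitrary regions coincides with consistency under the combinatorial/block semantics handled by the cited polynomial algorithms; this requires a careful argument that arbitrary (possibly disconnected) shapes neither forbid configurations realizable by blocks nor enable spurious ones that block-based algorithms would reject. I expect this to reduce to showing that for basic constraints the only geometric content is which mbb-tiles the target object intersects, and that any prescribed nonempty intersection pattern realizable abstractly is realizable by an actual region — which the disconnectedness of $\Regs$ makes easy, since one can place a small ball in each required tile. Once this realizability fact is in hand, the guess-and-check NP algorithm and the hardness reduction both go through routinely.
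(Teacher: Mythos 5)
Your NP-hardness argument is essentially the paper's own: lift each planar relation to its $M$-level counterpart, extrude a 2D solution into a common slab along the $z$ axis for one direction, and project a 3D solution onto the $xy$ plane for the other. The paper does exactly this, reducing from the incomplete basic 2D case (Liu's thesis, Theorem 5.8) and from the disjunctive 2D case (Skiadopoulos and Koubarakis, Theorem 6), so that half of your proposal is sound. The genuine gap is in your NP-membership argument. Your certificate is a choice of disjuncts, and your verifier is ``run the known polynomial-time consistency algorithms on the resulting basic network.'' But after guessing disjuncts you are left with a possibly \emph{incomplete} basic network over $\Regs$, and deciding consistency of exactly such networks is what Case 1 of this very theorem asserts to be NP-hard. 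So a sound and complete polynomial-time verifier of the kind you invoke cannot exist unless P equals NP; in the purely incomplete-basic case, where there is nothing to guess at all, your membership argument collapses into the claim that the problem is in P, contradicting the hardness you prove in the same breath.

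The cited algorithms of Chen et al.\ and Hou et al.\ do not fill this hole, because they decide consistency for \emph{blocks}, and block-consistency is not equivalent to consistency over $\Regs$. Concretely, the single constraint $a\ N^M{:}S^M\ b$ is satisfiable in $\Regs$ (take $a$ with one component north and one component south of $b$), but it is unsatisfiable by blocks: a convex block meeting both tiles contains a segment between the two witness points, and that segment must pass through $O^M(b)$, which the relation forbids. Hence a block-based verifier rejects yes-instances, and ``arbitrary shapes give at least as much freedom as blocks'' cuts in the wrong direction for completeness of verification. Your proposed patch --- ``place a small ball in each required tile'' --- also fails, because the tiles are defined by the minimum bounding boxes of the very objects being placed; each variable is simultaneously target and reference, and this mutual interaction of constraints is precisely what makes the problem hard, so tile-by-tile realizability does not imply joint realizability. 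Your instinct that a finite witness lemma is needed is correct (indeed, the paper's own one-line verification step, ``given a candidate solution, check all constraints in $O(|V|^2)$ time,'' silently presupposes one), but the right witness is a discretized solution: by the paper's Theorem~\ref{th:digital-equiv-3d}, $C$ is consistent over $\Regs$ if and only if it has a solution on a grid with $m,n,p = 2|V|-1$, and such a grid solution is a polynomial-size certificate on which conditions (C1) and (C2) can be checked directly in polynomial time. That, rather than an appeal to the block algorithms, is the clean route to NP-membership.
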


\noindent{\underline{Default constraints of 3D-nCDC}} To enable defaults for commonsense reasoning, we introduce {\em default 3D-nCDC constraints}, which are expressions of the form
$$
\ii{default}\ u\ \delta\ v
$$
\noindent where $u$ and $v$ are variables in $V$ and $\delta$ is a basic 3D-nCDC relation in $Q$.
The meaning of default 3D-nCDC constraints is provided in ASP over a discretized space.

%%%%%%%%%%%%%%%%%%%%%%%%%%%%%%%%%%%%%%%%%%%%%%%%%%%%%%%%%%
%\vspace{-2ex}
\section{Discretized Consistency Checking in 3D-nCDC}
\label{sec:discretized-consistency-3d}

Let $\Lambda_{m,n,p}$ denote the set of unit cubes (called cells) in a prism of size $m{\times} n {\times} p$, aligned with x, y, z axes.
Every cell is identified by its x, y, z coordinates, relative to the origin $(1,1,1)$.
Every spatial object $a$ is described by a nonempty subset $\Lambda_{m,n,p}(a)$ of cells in $\Lambda_{m,n,p}$ occupied by~$a$.

A cell $(x_1,y_1,z_1)$ is a {\em neighbor} of another cell $(x_2,y_2,z_2)$ if  $|x_{1}{-}x_{2}| {+} |y_{1}{-}y_{2}| {+} |z_{1}{-}z_{2}| {=} 1$.
A cell $(x_1,y_1,z_1)$ is connected to another cell $(x_2,y_2,z_2)$ if $(x_1,y_1,z_1)$ is a neighbor of $(x_2,y_2,z_2)$ or $(x_1,y_1,z_1)$ is connected to a neighbor $(x_3,y_3,z_3)$ of $(x_2,y_2,z_2)$.
A spatial object $a$ is {\em connected} in the grid if there exists a (stem) cell in $\Lambda_{m,n,p}(a)$ that is connected to every other cell in $a$.

The projection of an object $b$ on the x-axis is defined by x-coordinates of all cells of $b$ in $\Lambda_{m,n,p}$. Let $\inf^{m,n,p}_x(b)$ and $\sup^{m,n,p}_x(b)$ denote the infimum and supremum of the projection of $b$ on the x-axis. Similarly, the projections of $b$ on the y and z axes are denoted by $\inf^{m,n,p}_y(b)$, $\sup^{m,n,p}_y(b)$, $\inf^{m,n,p}_z(b)$, $\sup^{m,n,p}_z(b)$.
The minimum bounding box $mbb^{m,n,p}(b)$ of a spatial object $b$ in $\Lambda_{m,n,p}$ is the smallest prism in $\Lambda_{m,n,p}$ that contains $b$, that has the following sides parallel to the x, y or z axes: $\inf^{m,n,p}_x(b)$, $\sup^{m,n,p}_x(b)$, $\inf^{m,n,p}_y(b)$, $\sup^{m,n,p}_y(b)$, $\inf^{m,n,p}_z(b)$, $\sup^{m,n,p}_z(b)$.

The prism is partitioned into a set ${R}_{m,n,p}(b)$ of 27 tiles  with respect to minimum bounding box of a reference object $b$.
For example, $N^{B}_{m,n,p}(b)$ is the tile below and to the north of $b$, and consists of the cells $(x,y,z)\in \Lambda_{m,n,p}$ where $\inf^{m,n,p}_x(b) \leqs x \leqs \sup^{m,n,p}_x(b)$, $y {>}\sup^{m,n,p}_y(b)$, $z {<} \inf^{m,n,p}_z(b)$.

Let $D_{m,n,p}$ denote the set of all spatial objects in $\Lambda_{m,n,p}$.
A pair $(a,b)$ of spatial objects in $D_{m,n,p}$ {\em satisfies} a basic 3D-nCDC constraint $u\ \delta\ v$ if
\begin{itemize}
\item[(C1)]
$\ a \cap {R}_{m,n,p}(b) \neq \emptyset$ for every single-tile relation $R$ in $\delta$, and
\item[(C2)]
$\ a \cap {R}_{m,n,p}(b) \eqs \emptyset$ for every single-tile relation $R$ that is not included in $\delta$.
\end{itemize}

A pair $(a,b)$ of spatial objects in $D_{m,n,p}$ {\em satisfies} a disjunctive 3D-nCDC constraint  $u\ \delta\ v$ where $\delta=\{\delta_1,...,\delta_o\}$, if $a\ \delta_i\ b$ holds for exactly one $\delta_i \in \delta$.

Let $C$ be a 3D-nCDC network that consists of basic or disjunctive 3D-nCDC constraints specified by variables in $V\eqs\{v_1,...,v_l\}$.
A {\em solution} for $C$ is a set of l-tuples $(a_1, a_2, ..., a_l)$ of spatial objects in $D_{m,n,p}$ such that every constraint $v_i\ \delta\ v_j$ in $C$ is satisfied by the corresponding pair $(a_i,a_j)$ of spatial objects. If $C$ has a solution then it is called {\em consistent}.

The {\em discretized consistency checking problem} $I_{m,n,p}\eqs(C,V,D_{m,n,p},Q)$ in 3D-nCDC, decides the consistency of $C$.
The following theorem allows us to solve $I$ by declaratively solving $I_{m,n,p}$.

\begin{theorem}\label{th:digital-equiv-3d}
The consistency checking problem $I\eqs(C,V,D,Q)$ over $D \eqs \Regs$ and the discretized consistency checking problem
$I_{m,n,p}\eqs(C,V,D_{m,n,p},Q)$ where $m,n,p \geqs 2|V|-1$ have the same answers.
\end{theorem}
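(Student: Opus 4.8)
The plan is to prove the two problems equivalent by establishing both implications, needing the grid bound $m,n,p \geqs 2|V|-1$ only for the harder one; write $l \eqs |V|$. The direction from $I_{m,n,p}$ to $I$ is essentially definitional. A solution of $I_{m,n,p}$ assigns to each variable a nonempty set of unit cells of $\Lambda_{m,n,p}$; read as the union of the corresponding closed unit cubes, each such set is a nonempty, regular, compact volume, hence an element of $\Regs$ and an admissible value for $I$. It then remains to check that the discrete conditions (C1)--(C2) coincide with the continuous basic-relation semantics for these cube-unions: the discrete $mbb^{m,n,p}$ and its 27 tiles are exactly the traces on the grid of the continuous $mbb$ and tiles, and because tiles are open and only nonempty intersection is tested, a cube lying in a discrete tile meets the matching continuous tile and no neighbouring one (its faces touch the separating planes but contribute no interior point across them). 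Hence the same assignment solves $I$, with each disjunctive constraint handled disjunct by disjunct.

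For the converse I would begin from a continuous solution $(a_1,\dots,a_l)$ over $\Regs$ and digitize it so that every pairwise relation survives. The guiding observation is that each basic relation $a_i\ R\ a_j$ is a purely combinatorial invariant, determined by (i) for each axis, the relative order of the at most $2l$ endpoint values $\{\inf_x(a_t),\sup_x(a_t)\}_{t=1}^{l}$ (and likewise for y and z), refined to record whether each object's extent lies strictly inside, exactly on, or strictly outside each other object's two bounding planes; and (ii) for every ordered pair $(i,j)$, which of the 27 open tiles of $mbb(a_j)$ actually contain a point of $a_i$. Datum (ii) is genuinely three-dimensional: it is \emph{not} recoverable from the per-axis occupancies, since a tile requires a single point of $a_i$ to lie simultaneously in a prescribed x-, y- and z-strip. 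The target is therefore to build a discrete configuration realizing exactly this incidence data for all pairs at once.

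To realize it, I would embed each axis into $\{1,\dots,2l-1\}$ by a monotone assignment of the $2l$ endpoints to cells that reproduces their three-way positions from (i): endpoints separated by a gap lying outside all objects must receive distinct, non-adjacent cells (so that objects which merely touch a bounding plane stay strictly out of the corresponding open strip), whereas an endpoint pair whose intervening gap lies in the interiors of the straddling objects may be merged. A counting argument on the arrangement of the $l$ projection intervals shows $2l-1$ cells per axis always suffice --- this is exactly where $m,n,p \geqs 2|V|-1$ enters. Pushing (the interior of) each $a_i$ through the three axis maps coordinatewise yields $a_i' \ins D_{m,n,p}$ (disconnectedness is harmless, as $D \eqs \Regs$ allows it). Since the maps preserve precisely the comparisons with $\inf_x(a_j),\sup_x(a_j),\dots$ that define tile membership, $mbb^{m,n,p}(a_j')$ sits among the cells as $mbb(a_j)$ sits among the endpoints, and a point of $a_i$ in a tile $T$ of $a_j$ lands in the matching tile of $a_j'$. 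I would then check (C1) that every occupied tile receives a witness cell --- using an interior witness, whose existence regularity guarantees --- and (C2) that no empty tile does, concluding $a_i'\ R\ a_j' \Leftrightarrow a_i\ R\ a_j$ for all pairs and hence that $(a_1',\dots,a_l')$ solves $I_{m,n,p}$.

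The step I expect to dominate is (C2) under the simultaneity requirement: a single discrete object $a_i'$ must exhibit the correct relation to \emph{every} reference at once, so the digitization cannot be tuned to one $a_j$. The delicate cases are boundary coincidences and tangencies --- for instance $\sup_x(a_i) \eqs \inf_x(a_j)$, where $a_i$ touches the plane $x \eqs \inf_x(a_j)$ but, the middle strip being open, occupies no middle tile of $a_j$; a careless embedding would merge the two endpoints and manufacture a spurious middle-tile occupancy. Controlling this forces the tie-breaking of (i) (suprema placed strictly below the infima they meet, separating gaps kept non-degenerate) to be carried out coherently across all $\binom{l}{2}$ ordered pairs and all three axes simultaneously, while still fitting inside $2l-1$ cells; establishing that such a globally consistent embedding always exists, and that regularity lets the remaining lower-dimensional boundary coincidences be neutralized by a generic choice, is the technical heart of the argument.
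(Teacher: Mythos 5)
Your right-to-left direction matches the paper's (grid cells become closed unit cubes, take unions), and the skeleton of your left-to-right direction --- digitize via the per-axis arrangement of the at most $2|V|$ bounding values, then argue tile occupancy is preserved --- is also the paper's strategy. But the proof is not complete: the step you yourself defer as ``the technical heart'' (existence of a globally consistent embedding within $2|V|-1$ cells per axis) is exactly what has to be proven, and the partial rules you give for it do not work. The trouble is your bookkeeping, which sends \emph{endpoints to cells}; this is what forces tie-breaking and your merging/non-adjacency rules, and those rules fail in both directions. First, non-adjacency is never needed: all tiles are open, so CDC relations cannot distinguish a positive gap from a tangency, and adjacent cells always suffice. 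Second, and fatally, the merging criterion ``merge iff the intervening gap lies in the interiors of the straddling objects'' either breaks the cell count or breaks correctness, depending on how it is read. Take $l=2$, $a_1 = [0,3]\times[0,1]\times[0,1]$ and $a_2 = ([1,1.4]\cup[1.6,2])\times[0,1]\times[0,1]$ (disconnected objects are allowed, since $D = \Regs$). The four x-endpoints $0,1,2,3$ are distinct, and your rule forbids every merge: the gap $(1,2)$ is not contained in $a_2$'s interior (it contains the hole), and the gaps $(0,1)$, $(2,3)$ lie in only one straddling object's interior. So you need $4$ distinct cells on the x-axis, but the theorem must hold at $m=n=p=2|V|-1=3$. (Merging $\inf_x(a_2)$ with $\sup_x(a_2)$ is in fact harmless here, so a laxer rule is required; but the lax reading ``gap inside some object's interior'' wrongly collapses $a_1=[0,1]$, $a_2=[0.25,0.75]$ onto a single cell, turning the true relation $a_1\ W^M{:}O^M{:}E^M\ a_2$ into $a_1\ O^M\ a_2$.)

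The paper dissolves all of this with a different bookkeeping choice: the at most $2|V|$ distinct bound values on each axis become the \emph{lattice planes} of the grid, so cells correspond to the zones between consecutive values --- which yields the bound of at most $2|V|-1$ cells per axis with no case analysis --- and a discrete object occupies a cell iff the continuous object has \emph{positive volume} in the corresponding zone. Regularity makes positive-volume occupancy equivalent to meeting an open tile, so tangencies, holes, coincident bounds, and disconnected pieces are all handled automatically, and tile membership (defined by comparisons against the bound values) is preserved verbatim. If you replace your endpoint-to-cell maps and their merging rules with this zone/positive-volume digitization, your outline becomes the paper's proof; as written, the hard direction has a genuine gap.
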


It is important to emphasize here that we discretize the consistency checking problem, not the environment. For example, given a consistency checking problem with a set of qualitative spatial constraints about a building design (as mentioned in the introduction), we do not discretize the building itself; rather we try to solve the discretized consistency checking problem over a 3D grid of appropriate size.
We do not process grounded numerical spatial data or instantiate cardinal directions over real numbers either.

%%%%%%%%%%%%%%%%%%%%%%%%%%%%%%%%%%%%%%%%%%%%%%%%%%%%%%%%%%
%\vspace{-2ex}
\section{Discretized Consistency Checking in 3D-nCDC using ASP}
\label{sec:3d-ncdc-asp}

Let $I_{m,n,p} \eqs (C,V,D_{m,n,p},Q)$ be a discretized 3D-nCDC consistency checking problem, where $C$ consists of 3D-nCDC constraints and might be incomplete, and $D_{m,n,p}$ is the set of all spatial objects in $\Lambda_{m,n,p}$ that may be disconnected and have holes. In the following, we incrementally describe an ASP program to solve $I_{m,n,p}$. A brief review of ASP is provided in Appendix.

\vspace{-2ex}
\subsection{Basic 3D-nCDC Networks} \label{sec:basic-3d}

Suppose that $C$ contains basic 3D-nCDC constraints only.  Let us describe the ASP program $\Pi_{m,n,p}^{1}$ that solves $I_{m,n,p}$.

1) We describe every basic 3D-nCDC constraint $u\ \delta\ v$ in $C$, by atoms of the form $\ii{rel}(u,v,r)$ for each single-tile relation $r$ in $\delta$.
Then, $C$ can be represented by a set $F_B$ of facts:
\beq
\ii{rel}(u,v,r) \lar \quad (r\in \delta,\ u\ \delta\ v \in C).
\eeq{eq:rel-3d}
For example, a basic 3D-nCDC constraint $a \ N^A:NW^M\ b$ is represented in ASP by the facts:
$$\ii{rel}(a,b,\ii{NA}).\quad \ii{rel}(a,b,\ii{NWM}).$$

2) A $mbb^{m,n,p}(u)$ is generated for every spatial object $u$, by nondeterministically identifying the infimum/supremum of its projection on the x axis with the choice rules:
\beq
\ba l
\{\ii{inf}_{x}(u, \underline{x}) :\ 1\leqs \underline{x} \leqs m\} \eqs 1 \:  \lar  \quad (u\in V) \\
\{\ii{sup}_{x}(u, \overline{x}) :\ 1\leqs \overline{x} \leqs m\} \eqs 1 \:  \lar  \quad (u\in V).
\ea
\eeq{eq:generate-infsup-3d}

\noindent ensuring that the infimum is less than or equal to the supremum:
\beq
\ba l
\lar \ii{inf}_{x}(u,\underline{x}), \, \ii{sup}_{x}(u,\overline{x})   \quad (\underline{x} \gts \overline{x}, \: u \ins  V).
\ea
\eeq{eq:infsup-ineq-3d}
Similar rules are added for the infimum/supremum of its projection on y and z axes.

3) We instantiate every variable $u\in V$ by a spatial object in $D_{m,n,p}$, by nondeterministically assigning some cells $(x,y,z)$ in $\Lambda_{m,n,p}$ to $u$ so that (i) the minimum bounding box of this object is exactly $mbb^{m,n,p}(u)$ generated by rules~$(\ref{eq:generate-infsup-3d})\cups (\ref{eq:infsup-ineq-3d})$, and (ii) the 3D-nCDC constraints in $C$ are satisfied.

3)(i) An assignment of cells $(x,y,z)$ to a variable $u$ is described by atoms of the form~$\ii{occ}(u,x,y,z)$, nondeterministically generated by the choice rules:
\beq
\{\ii{occ}(u,x,y,z):\ (x,y,z) \in \Lambda_{m,n,p}\} \geqs 1 \: \lar   \quad (u\in V).
\eeq{eq:generate-3d}

Projection of this spatial object onto x axis are defined by the rules:
\beq
\ba l
\ii{xocc}(u,x) \lar \ii{occ}(u,x,y,z)   \quad ((x,y,z)\in \Lambda_{m,n,p}, \: u\in V).
\ea
\eeq{eq:project-mp-3d}
Similar rules are added for its projection on the y and z axes.

We ensure that, for x axis, the projected coordinates lie between the infimum and supremum,
\beq
\ba l
\lar \ii{inf}_{x}(u,\underline{x}), \, \ii{xocc}(u,x')  \quad (x' \lts \underline{x}, \: 1\leqs x' \leqs m, \: u\ins V)  \\
\lar \ii{sup}_{x}(u,\overline{x}), \, \ii{xocc}(u,x')  \quad (x' \gts \overline{x}, \: 1\leqs x' \leqs m, \: u\ins V)
\ea
\eeq{eq:cells-inside-mbr-3d}
\noindent at least one of the cells assigned to $u$ is on the infimum, and another one on the supremum.
\beq
\ba l
\lar \ii{not} \: \ii{xocc}(u, \underline{x}),  \ii{inf}_{x}(u,\underline{x})   \quad (u\ins V)  \\
\lar \ii{not} \: \ii{xocc}(u, \overline{x}),  \ii{sup}_{x}(u,\overline{x})   \quad (u\ins V).
\ea
\eeq{eq:infsup-existcell-3d}
Similar constraints are added for its projection on the y and z axes.

3(ii) We ensure that the instantiations of objects (by assignment of cells $(x,y,z)$ to variables $u\in V$) satisfies every basic 3D-nCDC constraint $u\ \delta\ v$ in $C$.  For that, we add constraints to ensure that conditions (C1) and (C2) are not violated.

For example, if $\delta$ contains the single tile relation $N^{B}$ then we add the following to describe when condition (C1) for $N^{B}$ is violated (i.e., when $u$ does not occupy any cells to the north of and below $mbb^{m,n,p}(v)$).
\beq
\ba l
\ii{violated}(u,v) \lar \ii{rel}(u,v,NB),\ \ii{inf}_{x}(v,\underline{x}),\ \ii{sup}_{x}(v,\overline{x}),\ \ii{sup}_{y}(v,\overline{y}),\ \ii{inf}_{z}(v,\underline{z}), \\
\quad \#\ii{count}\{x,y,z{:}\ \ii{occ}(u,x,y,z), \underline{x} \leqs x \leqs \overline{x},\ y{>} \overline{y},\ z{<} \underline{z},\ (x,y,z) \ins \Lambda_{m,n,p}\} \leqs 0 \quad (u\in V).
\ea
\eeq{eq:c1-3d}
If $\delta$ does not contain $N^{B}$, then the following rules describe when condition (C2) is violated (i.e., $u$ occupies some cells to the north of and below $mbb^{m,n,p}(v)$).
\beq
\ba l
\ii{violated}(u,v) \lar  \#\ii{count}\{x,y,z{:}\ \ii{occ}(u,x,y,z), \underline{x} \leqs x \leqs \overline{x},\ y{>} \overline{y},\ z{<} \underline{z},\ (x,y,z) \ins \Lambda_{m,n,p}\} \geqs 1,\\
\quad \no\ \ii{rel}(u,v,NB),\ \ii{existrel}(u,v),\ \ii{inf}_{x}(v,\underline{x}),\ \ii{sup}_{x}(v,\overline{x}),\ \ii{sup}_{y}(v,\overline{y}),\ \ii{inf}_{z}(v,\underline{z}) \quad (u\in V).
\ea
\eeq{eq:c2-3d}

\noindent Here, since the network $C$ might be incomplete, $\ii{existrel}(u,v)$ atoms identify which pair of variables have a constraint in the network $C$:
\beq
\ii{existrel}(u,v) \lar \ii{rel}(u,v,r) \quad (r\in \mathcal{R}^s,\ u, v \in V).
\eeq{eq:existrel-3d}
For every one of 26 other single tile relations, we add rules similar to (\ref{eq:c1-3d}) and (\ref{eq:c2-3d}).
After that, we eliminate such violations:
\beq
\lar \ii{violated}(u,v), \ii{existrel}(u,v) \quad (u, v \in V).
\eeq{eq:violated}

The ASP program $\Pi_{m,n,p}^{1}$ described above (including the ASP description $F_B$ of $C$) for checking the consistency of a basic 3D-nCDC network $C$ over $D_{m,n,p}$ is sound and complete. Let $\mathcal{O}_{m,n,p}$ denote the set of atoms of the form $\ii{occ}(u,x,y,z)$ where $u\in V$ and $x$, $y$, $z$ are positive integers such that $1\leq x\leq m$, $1\leq y\leq n$, $1\leq z\leq p$.

\begin{theorem}\label{th:thm-correct-basic-3d}
	Let $I_{m,n,p} \eqs (C,V,D_{m,n,p},Q)$ be a discretized consistency checking problem, where $C$ is a basic 3D-nCDC network.
	For an assignment $X$ of spatial objects in $D_{m,n,p}$ to variables in $V$, $X$ is a solution of $I_{m,n,p}$ if and only if $X$ can be represented in the form of $X \eqs Z \cap \mathcal{O}_{m,n,p}$ for some answer set $Z$ of $\Pi_{m,n,p}^{1}$.
	Moreover, every solution of $I_{m,n,p}$ can be represented in this form in only one way.
\end{theorem}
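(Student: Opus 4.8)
The plan is to establish the two directions of the equivalence and then uniqueness, identifying the assignment $X$ with the set of $\ii{occ}$-atoms that encodes it and writing $a_u$ for the object it assigns to a variable $u$. The pivotal fact throughout will be that the bounding box guessed by the $\ii{inf}$/$\ii{sup}$ choice rules is forced to coincide with the \emph{actual} minimum bounding box of $a_u$; once that alignment holds, conditions (C1) and (C2) translate directly into the violation-detecting rules. For soundness, let $Z$ be an answer set of $\Pi_{m,n,p}^{1}$ and put $X \eqs Z \cap \mathcal{O}_{m,n,p}$. The lower bound $1$ in the choice rule~(\ref{eq:generate-3d}) makes each $a_u$ a nonempty object in $D_{m,n,p}$. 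The key intermediate claim is that, for each axis, the guessed bounds equal the true extrema of $a_u$: rule~(\ref{eq:generate-infsup-3d}) picks exactly one candidate per bound, the constraints~(\ref{eq:cells-inside-mbr-3d}) force every occupied cell to lie inside the guessed bounds, and~(\ref{eq:infsup-existcell-3d}) forces some occupied cell onto each bound, so together they pin the guess down to the minimum bounding box. With the bounds correct, the tile region appearing in each violation rule is exactly the intended tile of the reference object, so~(\ref{eq:c1-3d}) derives $\ii{violated}(u,v)$ precisely when (C1) fails for a single tile of $\delta$, and~(\ref{eq:c2-3d}) derives it precisely when (C2) fails for a tile absent from $\delta$; the denial~(\ref{eq:violated}), restricted to the constrained pairs via the $\ii{existrel}$-atoms of~(\ref{eq:existrel-3d}), then yields satisfaction of every constraint in $C$, so $X$ is a solution.

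For completeness, given a solution $X$ I would build the candidate interpretation $Z$ from the facts $F_B$ of~(\ref{eq:rel-3d}), the $\ii{occ}$-atoms of $X$, the $\ii{inf}$/$\ii{sup}$-atoms holding the true extrema of each $a_u$, the projection atoms $\ii{xocc}$, $\ii{yocc}$, $\ii{zocc}$ derived by~(\ref{eq:project-mp-3d}), the $\ii{existrel}$-atoms from~(\ref{eq:existrel-3d}), and no $\ii{violated}$-atom. I would then check that $Z$ satisfies every rule and violates no denial: the inequalities~(\ref{eq:infsup-ineq-3d}) and the containment/attainment constraints~(\ref{eq:cells-inside-mbr-3d})--(\ref{eq:infsup-existcell-3d}) hold because the chosen bounds are the true extrema, and~(\ref{eq:violated}) holds because $X$ satisfies (C1)/(C2) so no $\ii{violated}$-atom is derivable. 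Finally I would verify that $Z$ is a minimal model of the reduct of $\Pi_{m,n,p}^{1}$ with respect to $Z$, hence an answer set. This minimality argument is routine once one observes that the only guessed atoms are those produced by the choice rules~(\ref{eq:generate-3d}) and~(\ref{eq:generate-infsup-3d}), while all remaining atoms ($\ii{xocc}$, $\ii{existrel}$, and the absence of $\ii{violated}$) are founded by ordinary rules and forced by the guess, so no proper subset of $Z$ is a model; evidently $Z \cap \mathcal{O}_{m,n,p} \eqs X$.

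For uniqueness, I would show that any answer set $Z$ with $Z \cap \mathcal{O}_{m,n,p} \eqs X$ is completely determined by $X$: once the $\ii{occ}$-atoms are fixed,~(\ref{eq:cells-inside-mbr-3d}) together with~(\ref{eq:infsup-existcell-3d}) leave the true extrema as the only feasible choice for~(\ref{eq:generate-infsup-3d}), so the $\ii{inf}$/$\ii{sup}$-atoms are fixed, and then the projection, $\ii{existrel}$, and (non-)$\ii{violated}$ atoms follow from their defining rules; hence distinct answer sets yield distinct $\ii{occ}$-restrictions, and each solution arises from exactly one answer set. The step I expect to be the main obstacle is precisely this alignment lemma: because the program guesses the bounding box independently of the object and only afterwards ties the two together through~(\ref{eq:cells-inside-mbr-3d}) and~(\ref{eq:infsup-existcell-3d}), one must argue carefully that no ``slack'' guess survives on any of the three axes simultaneously. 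A secondary technical point is the precise handling of the $\#\ii{count}$ aggregates of~(\ref{eq:c1-3d})--(\ref{eq:c2-3d}) and of the choice rules under the answer-set semantics when verifying minimality of the reduct; this is standard but should be stated carefully to make the founded/guessed split rigorous.
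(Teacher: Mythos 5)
Your proposal is correct, but it follows a genuinely different route from the paper. The paper's proof is an incremental application of the Splitting Set Theorem and Proposition~2 of Erdogan and Lifschitz (2004): it builds $\Pi^{1}_{m,n,p}$ layer by layer (facts and $\ii{existrel}$ rules, then the $\ii{inf}/\ii{sup}$ choice rules, then (\ref{eq:infsup-ineq-3d}), then (\ref{eq:generate-3d}), the projections (\ref{eq:project-mp-3d}), the constraints (\ref{eq:cells-inside-mbr-3d})--(\ref{eq:infsup-existcell-3d}), the violation rules (\ref{eq:c1-3d})--(\ref{eq:c2-3d}), and finally (\ref{eq:violated})), characterizing the answer sets of each successive subprogram as a bottom/top decomposition, so that the structure of answer sets of the whole program---and hence the correspondence with solutions---falls out of the layering without ever touching reducts explicitly. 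You instead argue both directions head-on: soundness via what you call the alignment lemma (the constraints (\ref{eq:cells-inside-mbr-3d}) and (\ref{eq:infsup-existcell-3d}) pin the guessed bounds to the true extrema of the instantiated object, after which (\ref{eq:c1-3d})--(\ref{eq:c2-3d}) mirror (C1)/(C2) exactly), and completeness via an explicit construction of the candidate interpretation followed by a minimality check on the reduct. The trade-off: the paper's modular argument avoids any direct reduct reasoning (including the somewhat delicate treatment of choice rules and $\#\ii{count}$ aggregates you flag as a technical burden) and extends mechanically to Theorem~\ref{th:asp-correct-disj-3d} by adding further layers; your direct argument is more self-contained, requires no splitting machinery, and makes explicit the bounding-box alignment fact that the paper leaves implicit in its remark that (\ref{eq:infsup-existcell-3d}) ensures ``correct values have been chosen.'' Your uniqueness argument coincides with the paper's: the $\ii{occ}$ atoms determine the projections, the (unique) minimum bounding box determines the $\ii{inf}/\ii{sup}$ atoms, and all remaining atoms are then fixed by their defining rules.
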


\noindent From Theorems~\ref{th:digital-equiv-3d} and~\ref{th:thm-correct-basic-3d}:

\begin{corollary}\label{cor:correct-basic-3d}
    The consistency checking problem $I\eqs(C,V,D,Q)$ has a solution if and only if the program $\Pi_{m,n,p}^{1}$ ($m,n,p \geqs 2|V|-1$) has an answer set.
\end{corollary}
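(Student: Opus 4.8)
The plan is to chain the two preceding results and observe that nothing further is required. Theorem~\ref{th:digital-equiv-3d} reduces consistency over $\Regs$ to consistency over the grid $D_{m,n,p}$, and Theorem~\ref{th:thm-correct-basic-3d} characterizes solutions over $D_{m,n,p}$ in terms of answer sets of $\Pi_{m,n,p}^{1}$; composing the two equivalences yields the desired biconditional between consistency of $I$ and existence of an answer set. Throughout I take $D \eqs \Regs$ and $m,n,p \geqs 2|V|-1$, matching the hypotheses of both theorems, and I read $C$ as a basic network, consistent with the construction of $\Pi_{m,n,p}^{1}$ in Section~\ref{sec:basic-3d} and with the standing assumption of Theorem~\ref{th:thm-correct-basic-3d}.

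For the forward direction, suppose $I\eqs(C,V,D,Q)$ has a solution, i.e. $I$ is consistent. By Theorem~\ref{th:digital-equiv-3d}, since $I$ and $I_{m,n,p}$ have the same answers, $I_{m,n,p}$ is also consistent; let $X$ be one of its solutions, an assignment of spatial objects in $D_{m,n,p}$ to the variables in $V$. Theorem~\ref{th:thm-correct-basic-3d} then produces an answer set $Z$ of $\Pi_{m,n,p}^{1}$ with $X \eqs Z \cap \mathcal{O}_{m,n,p}$; in particular $\Pi_{m,n,p}^{1}$ has an answer set.

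For the converse, suppose $\Pi_{m,n,p}^{1}$ has an answer set $Z$. Set $X \eqs Z \cap \mathcal{O}_{m,n,p}$; by the ``if'' part of Theorem~\ref{th:thm-correct-basic-3d}, $X$ is a solution of $I_{m,n,p}$, so $I_{m,n,p}$ is consistent. Applying Theorem~\ref{th:digital-equiv-3d} once more, now in the direction from the grid back to $\Regs$, gives that $I$ is consistent, i.e. $I$ has a solution. This closes the biconditional.

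The argument is essentially a bookkeeping composition, so I expect no deep obstacle; the only points that need care are that the side conditions of the two theorems line up, namely that the grid dimensions $m,n,p \geqs 2|V|-1$ built into $\Pi_{m,n,p}^{1}$ are exactly those for which Theorem~\ref{th:digital-equiv-3d} asserts equal answers, and that the corollary's domain is indeed $\Regs$ so that Theorem~\ref{th:digital-equiv-3d} is applicable. Note that only the existential half of Theorem~\ref{th:thm-correct-basic-3d} is used here; its uniqueness clause plays no role in the consistency statement, though it would become relevant if one wished to count or enumerate solutions rather than merely decide consistency.
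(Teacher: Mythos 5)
Your proposal is correct and matches the paper exactly: the paper derives Corollary~\ref{cor:correct-basic-3d} precisely by composing Theorem~\ref{th:digital-equiv-3d} (equivalence of $I$ over $\Regs$ with the discretized problem $I_{m,n,p}$ for $m,n,p \geqs 2|V|-1$) with Theorem~\ref{th:thm-correct-basic-3d} (the answer-set characterization of solutions of $I_{m,n,p}$), offering no further argument. Your explicit two-direction bookkeeping, including the observation that the uniqueness clause of Theorem~\ref{th:thm-correct-basic-3d} is not needed, is a faithful (and slightly more detailed) rendering of the paper's one-line justification.
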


\vspace{-3ex}
\subsection{Disjunctive 3D-nCDC Constraints} \label{sec:disjunctive-3d}

Suppose that $C$ contains basic or disjunctive 3D-nCDC constraints only.  Let us describe the ASP program $\Pi_{m,n,p}^{2}$ that solves $I_{m,n,p}$. The program $\Pi_{m,n,p}^{2}$ is obtained from $\Pi_{m,n,p}^{1}$, by adding new rules for each disjunctive 3D-nCDC constraint as follows.

1) Every disjunctive 3D-nCDC constraint $u\ \{\delta_1, ..., \delta_o\}\ v$ in $C$ is represented in ASP by a set $F_V$ of facts:
\beq
\ba l
\ii{disjrel}(u,v,i,r) \lar \quad (r \in \delta_i, \: 1\leq i \leq o).
\ea
\eeq{eq:cdc-disj-rel-3d}

2) Recall that a pair $(a,b)$ of spatial objects satisfies $u\ \delta\ v$ where $\delta=\{\delta_1,...,\delta_o\}$, if $a\ \delta_i\ b$ holds for exactly one $\delta_i \in \delta$.  Therefore, for every disjunctive 3D-nCDC constraint $u\ \delta\ v$, we nondeterministically choose $\delta_i \in \delta$, and represent the basic 3D-nCDC constraint $u\ \delta_i\ v$:
\begin{align}
\{\ii{chosen}(u,v,i): 1\leq i \leq o \} \eqs 1 \: \lar  \label{eq:disj-choose-3d} \\
\ii{rel}(u,v,R) \lar \ii{chosen}(u,v,i), \: \ii{disjrel}(u,v,i,R).  \label{eq:rel-disj-3d}
\end{align}

The ASP program $\Pi_{m,n,p}^{2}$ is sound and complete.
\begin{theorem}\label{th:asp-correct-disj-3d}
	Let $I_{m,n,p} \eqs (C,V,D_{m,n,p},Q)$ be a discretized consistency checking problem, where $C$ contains basic or disjunctive 3D-nCDC constraints.
	For an assignment $X$ of spatial objects in $D_{m,n,p}$ to variables in $V$, $X$ is a solution of $I_{m,n,p}$ if and only if $X$ can be represented in the form of $X \eqs Z \cap \mathcal{O}_{m,n,p}$ for some answer set $Z$ of $\Pi_{m,n,p}^{2}$.
	Moreover, every solution of $I_{m,n,p}$ can be represented in this form in only one way.
\end{theorem}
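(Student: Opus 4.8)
The plan is to reduce the disjunctive case to the already–established basic case (Theorem~\ref{th:thm-correct-basic-3d}) by exploiting the fact that $\Pi_{m,n,p}^{2}$ differs from $\Pi_{m,n,p}^{1}$ only through the rules~\eqref{eq:cdc-disj-rel-3d}, \eqref{eq:disj-choose-3d}, and~\eqref{eq:rel-disj-3d}, which merely \emph{generate} the $\ii{rel}$ atoms for the disjunctive constraints instead of taking them from the input facts $F_B$. First I would invoke the Splitting Set Theorem, with splitting set $U$ consisting of all atoms $\ii{disjrel}(u,v,i,r)$ and $\ii{chosen}(u,v,i)$ together with the atoms $\ii{rel}(u,v,r)$ belonging to some disjunctive constraint $u\ \delta\ v$ of $C$. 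Since the network assigns at most one constraint per pair of variables, each $\ii{rel}(u,v,r)$ atom is either a basic–constraint fact or a disjunctive–derived atom, never both, so $U$ is indeed a splitting set. The bottom program relative to $U$ is exactly $F_V$ together with rules~\eqref{eq:disj-choose-3d} and~\eqref{eq:rel-disj-3d}, and its answer sets correspond one–to–one to the ways of committing, for each disjunctive constraint, to exactly one disjunct: an answer set $A$ of the bottom contains $\ii{chosen}(u,v,i)$ for one index $i$ per disjunctive constraint and derives precisely $\{\ii{rel}(u,v,r): r\in\delta_i\}$.

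Next, for a fixed bottom answer set $A$, I would describe the residual (top) program. Because the only $U$–atoms occurring in the remaining rules of $\Pi_{m,n,p}^{2}$ are $\ii{rel}$ atoms, and these occur exactly where the input facts of $F_B$ occur in $\Pi_{m,n,p}^{1}$, the evaluated top coincides rule for rule with $\Pi_{m,n,p}^{1}$ for the \emph{basic} network $C_A$ obtained from $C$ by replacing each disjunctive constraint $u\ \delta\ v$ with the basic constraint $u\ \delta_i\ v$ selected by $A$; the $\ii{rel}$ atoms contributed by $A$ play precisely the role of $F_B$. By the Splitting Set Theorem, the answer sets $Z$ of $\Pi_{m,n,p}^{2}$ are therefore exactly the unions $A \cup Z'$, where $A$ ranges over the disjunct–commitments and $Z'$ ranges over the answer sets of $\Pi_{m,n,p}^{1}$ for the corresponding $C_A$.

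It then remains to transfer correctness through this correspondence. Applying Theorem~\ref{th:thm-correct-basic-3d} to each $C_A$, the sets $X = Z' \cap \mathcal{O}_{m,n,p}$ are exactly the solutions of the basic problem $(C_A,V,D_{m,n,p},Q)$, uniquely represented. To finish, I would match these with the solutions of the disjunctive problem. By definition $(a,b)$ satisfies $u\ \delta\ v$ iff $a\ \delta_i\ b$ holds for exactly one $i$; and, crucially, conditions (C1)+(C2) force the set of tiles that $a$ occupies relative to $b$ to equal $\delta_i$ \emph{exactly}, so two distinct basic relations can never be simultaneously satisfied. Hence $X$ satisfies a disjunctive constraint iff there is a \emph{unique} index $i$ for which $X$ satisfies $\delta_i$. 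This yields the bijection: $X$ solves $I_{m,n,p}$ with $C$ iff $X$ solves $(C_A,V,D_{m,n,p},Q)$ for the $A$ recording, per disjunctive constraint, the unique disjunct that $X$ satisfies.

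The main obstacle I anticipate is the bookkeeping in the Splitting Set argument — verifying that the derived $\ii{rel}$ atoms for disjunctive constraints feed into the (C1)/(C2) rules~\eqref{eq:c1-3d}--\eqref{eq:c2-3d} and into~\eqref{eq:existrel-3d}--\eqref{eq:violated} in exactly the same way the input facts $F_B$ do, so that the residual program really is $\Pi_{m,n,p}^{1}$ for $C_A$ and no spurious dependency crosses the split. A secondary point needing care is the uniqueness claim: I must check that the forced uniqueness of the satisfied disjunct (from C1+C2) makes the commitment $A$ a function of $X$, so that each solution arises from one and only one answer set, combining this with the single–way representation already guaranteed by Theorem~\ref{th:thm-correct-basic-3d}.
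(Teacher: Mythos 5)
Your proof is correct, but it takes a genuinely different route from the paper's. The paper does not reuse Theorem~\ref{th:thm-correct-basic-3d} at all: it re-runs the entire incremental splitting-set construction from scratch, building a chain of subprograms $\Pi^{2,a}_{m,n,p},\dots,\Pi^{2,i}_{m,n,p}$ with the disjunctive rules (\ref{eq:cdc-disj-rel-3d})--(\ref{eq:rel-disj-3d}) as the first layer, and then carries the derived basic network $\hat{C}$ (your $C_A$) explicitly through every subsequent layer (MBB generation, cell instantiation, projections, the (C1)/(C2) rules, and finally (\ref{eq:violated})), re-verifying each step by another application of the Splitting Set Theorem or of Proposition~2 of \cite{Erdogan2004}. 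You instead apply the Splitting Set Theorem exactly once, with the splitting set consisting of the $\ii{disjrel}$ and $\ii{chosen}$ atoms together with the $\ii{rel}$ atoms of disjunctive pairs, so that the bottom part encodes precisely the disjunct commitments $A$, and you then invoke Theorem~\ref{th:thm-correct-basic-3d} as a black box on each induced basic network $C_A$. This modular decomposition is shorter, avoids the paper's repetition, and makes explicit the structure ``answer set of $\Pi^{2}_{m,n,p}$ $=$ disjunct selection plus answer set of $\Pi^{1}_{m,n,p}$ for $C_A$.'' Its price is exactly the bookkeeping you flag: the evaluated top of $\Pi^{2}_{m,n,p}$ with respect to $A$ is not literally $\Pi^{1}_{m,n,p}$ for $C_A$, since the latter still contains the chosen-disjunct $\ii{rel}$ atoms as facts; one closes this by a second, trivial split of $\Pi^{1}_{m,n,p}$ for $C_A$ on those facts, after which the two residual programs coincide and answer sets match up as $Z \eqs A \cup Z'$. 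Both proofs ultimately rest on the same key observation for uniqueness, namely that (C1) and (C2) force the occupied tiles to equal the stated relation exactly, so a pair of objects satisfies at most one basic relation and the commitment $A$ is a function of $X$ (the paper words this as ``a pair of objects satisfies only one basic 3D-nCDC relation so the chosen disjuncts \dots must be the same''); in your version the remainder of the uniqueness claim is then inherited from the uniqueness part of Theorem~\ref{th:thm-correct-basic-3d}, whereas the paper re-derives it by tracking the determinacy of the $\ii{xocc}$, $\ii{inf}$, and $\ii{sup}$ atoms.
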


\vspace{-3ex}
\subsection{Default 3D-nCDC Constraints}    % / Commonsense Knowledge
\label{sec:default_constr_3d}

Suppose that $C$ also contains default 3D-nCDC constraints.  Let us describe the ASP program $\Pi_{m,n,p}^{3}$ that solves $I_{m,n,p}$. The program $\Pi_{m,n,p}^{3}$ is obtained from $\Pi_{m,n,p}^{2}$, by adding new rules for each default 3D-nCDC constraint as follows.

1) We represent every default 3D-nCDC constraint $\ii{default}\ u\ \delta \ v$ (where $\delta$ is a basic relation) by a set $F_D$ of facts:
\beq
\ba l
\ii{defaultrel}(u,v,r) \lar \quad (r \in \delta).
\ea
\eeq{eq:cdcp-default-rel-3d}

2) The default 3D-nCDC constraint $\ii{default}\ u\ \delta \ v$ applies if there is no evidence against it:
\beq
\ii{drel}(u,v) \lar \no\ \neg \ii{drel}(u,v), \ii{defaultrel}(u,v,r) \quad (r\in \delta).
\eeq{eq:def-apply-3d}

3) The evidence against a default constraint $\ii{default}\ u\ \delta \ v$ can be due to violations of conditions (C1) and (C2), which are defined by atoms of the form $\ii{violatedDef}(u,v)$ similar to atoms $\ii{violated}(u,v)$: use \ii{defaultrel} instead of \ii{rel}. For example, if $\delta$ contains the single-tile relation $N^{B}$ then we add the following rules to describe when condition (C1) for $N^{B}$ is violated.
\beq
\ba l
\ii{violatedDef}(u,v) \lar \ii{defaultrel}(u,v,NB),\ \ii{inf}_{x}(v,\underline{x}),\ \ii{sup}_{x}(v,\overline{x}),\ \ii{sup}_{y}(v,\overline{y}),\ \ii{inf}_{z}(v,\underline{z}), \\
\quad \#\ii{count}\{x,y,z{:}\ \ii{occ}(u,x,y,z), \underline{x} \leqs x \leqs \overline{x},\  y{>} \overline{y},\ z{<} \underline{z},\ (x,y,z) \ins \Lambda_{m,n,p}\} \leqs 0  \quad (u\in V).
\ea
\eeq{eq:c1-3d-def}
If $\delta$ does not contain $N^{B}$, then the following rules describe when condition (C2) is violated.
\beq
\ba l
\ii{violatedDef}(u,v) \lar  \no\ \ii{defaultrel}(u,v,NB),\ \ii{existDefRel}(u,v),\\
\quad \#\ii{count}\{x,y,z{:}\ \ii{occ}(u,x,y,z), \underline{x} \leqs x \leqs \overline{x},\ y{>} \overline{y},\ z{<} \underline{z},\ (x,y,z) \ins \Lambda_{m,n,p}\} \geqs 1, \\
\quad \ii{inf}_{x}(v,\underline{x}),\ \ii{sup}_{x}(v,\overline{x}),\ \ii{sup}_{y}(v,\overline{y}),\ \ii{inf}_{z}(v,\underline{z}) \quad (u\in V).
\ea
\eeq{eq:c2-3d-def}

For every one of 26 other single tile relations, we add rules similar to (\ref{eq:c1-3d-def}) and (\ref{eq:c2-3d-def}).

4) Then, the evidence against a default 3D-nCDC constraint $\ii{default}\ u\ \delta \ v$  via such violations can be defined as follows:
\beq
\ba l
\neg \ii{drel}(u,v) \lar \ii{violatedDef}(u,v), \ii{existDefRel}(u,v) \\
\xleftarrow{\scriptstyle\sim} \: \neg \ii{drel}(u,v), \ii{existDefRel}(u,v) \quad [1@1,u,v]
\ea
\eeq{eq:violated-def}
where $\ii{existDefRel}(u,v)$ is defined as follows:
\beq
\ii{existDefRel}(u,v) \lar \ii{defaultrel}(u,v,r) \quad (r\in \mathcal{R}^s,\ u, v \in V).
\eeq{eq:existdefrel-3d}

The weak constraint above minimizes the evidences provided by abductive inferences of occupied cells.
The rule aims to satisfy as many default 3D-nCDC constraints as possible, so as not to conflict with the other 3D-nCDC constraints in $C$.

5) The evidence (or abnormal cases) against a default 3D-nCDC constraint can be provided by the user. Consider, for instance, a building whose entrance is from its ceiling; then, the abnormal entrance provides an exception to a default constraint that expresses that the ``normally, the terrace is above the entrance''. This exception can be expressed as follows:
$$
\ba l
\neg \ii{drel}(u,v) \lar \ii{ab}(v), \ii{existDefRel}(u,v)\\
\neg \ii{drel}(u,v) \lar \ii{ab}(u), \ii{existDefRel}(u,v)\\
\ii{ab}(\ii{Entrance}) \lar .
\ea
$$

For every answer set $Z$ for $\Pi_{m,n,p}^{3}$, the assumption expressed by a default 3D-nCDC constraint $\ii{default}\ u\ \delta \ v$ {\em applies} if there is no exception $\ii{drel}(u,v)$ in $Z$ against the default.

%%%%%%%%%%%%%%%%%%%%%%%%%%%%%%%%%%%%%%%%%%%%%%%%%%%%%%%%%%
%\vspace{-2ex}
\section{Connected Spatial Objects} \label{sec:connected-3d}

Until now, we have assumed that objects belong to $\Regs$, and they can be disconnected. In many real-world applications, spatial objects are connected (and thus belong to $\Reg$). We ensure connectedness of these objects, by adding the following rules to $\Pi_{m,n,p}^{3}$.

For each spatial object, we formulate the concept of connectedness by incrementally defining its connected cells starting from one cell (called the stem cell), and then enforce all the cells of the object to be reachable from this stem cell. Note that it is sufficient to check the connectedness only for objects which act as target variables in some constraint in $C$. The connectedness of other objects can be accomplished by freely constructing them inside their minimum bounding boxes.

1) Let $Trg_C\subseteq V$ be the set of variables that appear as a target object in some constraint in $C$.
We define the stem cell for each target spatial object $u\in Trg_C$, as the left bottom below corner cell of the object. First, we find the cells with minimum x coordinate:
\beq
\ba l
\ii{left-side}(u,y,z) \lar \ii{inf}_{x}(u,\underline{x}), \, \ii{occ}(u,\underline{x},y,z)  \quad (1 \leqs y \leqs n, \: 1 \leqs z \leqs p, \: u\ins Trg_C) \\
\ii{left-border}(u,y) \lar \ii{inf}_{x}(u,\underline{x}), \, \ii{occ}(u,\underline{x},y,z)  \quad (1 \leqs y \leqs n, \: 1 \leqs z \leqs p, \: u\ins Trg_C).
\ea
\eeq{eq:left-bordery-3d}
Then, among these cells, we find the cells with the minimum y coordinate
\beq
\ii{ymin}(u,y_m) \lar \# \ii{min} \: \{y: \; \ii{left-border}(u,y)\, \}  \eqs y_m \quad (u\ins Trg_C).
\eeq{eq:lymin-3d}
Then, among these cells, we pick the cell with minimum z coordinate:
\beq
\ba l
\ii{zborder}(u,z) \lar \ii{left-side}(u,y_m,z), \, \ii{ymin}(u,y_m)  \quad ( u\ins Trg_C)  \\%\label{eq:zbordery-3d}  \\
\ii{zmin}(u,z_m) \lar \# \ii{min} \: \{z: \; \ii{zborder}(u,z)\, \}  \eqs z_m  \quad (u\ins Trg_C).   %\label{eq:zmin-3d}
\ea
\eeq{eq:zmin-3d}
Then, we define the stem cell as follows:
\beq
\ba l
\ii{stem}(u,\underline{x},y_m,z_m) \lar \ii{inf}_{x}(u,\underline{x}), \, \ii{ymin}(u,y_m), \, \ii{zmin}(u,z_m)  \quad (u\ins Trg_C).  
\ea
\eeq{eq:stem-cell-3d}

2) For every target spatial object $u\in Trg_C$), we define a set of connected cells starting from the stem cell:
\beq
\ba l
\ii{connset}(u,x,y,z) \lar \ii{stem}(u,x,y,z)  \quad (u\ins Trg_C).  \\
\ii{connset}(u,x_2,y_2,z_2) \lar \ii{connset}(u,x_1,y_1,z_1), \ii{occ}(u,x_2,y_2,z_2) \\
\quad (|x_2-x_1|+|y_2-y_1|+|z_2-z_1|=1,\ u\ins Trg_C).
\ea
\eeq{eq:connected-closure-3d}

3) We ensure that every cell of $u$ belongs to the connected set:
\beq
\ba l
\lar not \: connset(u,x,y,z), \, \ii{occ}(u,x,y,z)  \quad (u\ins Trg_C).
\ea
\eeq{eq:conn-test-3d}

%%%%%%%%%%%%%%%%%%%%%%%%%%%%%%%%%%%%%%%%%%%%%%%%%%%%%%%%%%
\vspace{-2ex}
\section{Inferring Missing 3D-nCDC Relations}\label{sec:infer-3d}

Let $Z$ be an answer set for $\Pi_{m,n,p}^{3}$.  For every pair of different spatial objects $a$ and $b$, we say that $a$ and $b$ are {\em related} by a 3D-nCDC relation in $Z$ if there exists an atom $\ii{rel}(a,b,r)$ for some single-tile relation $r\in \mathcal{R}^s$ in $Z$, or a $\ii{drel}(a,b)$ atom in $Z$.
Otherwise, we say that there is a missing relation between $a$ and $b$. In such cases (e.g., to explain the relative direction between two objects), it is beneficial to infer the missing relations.

1) Suppose that the user specifies which missing relations $(u,v)$ shall be inferred, by a set $F_I$ of facts of the form
$\ii{toinfer}(u,v).$

2) To infer a missing relating between two different spatial objects $u$ and $v$, we nondeterministically generate a basic 3D-nCDC relation $\delta$ that consists of single-tile relations $r$:
\beq
\ba l
\ii{known}(u,v) \lar \ii{existrel}(u,v). \\
\ii{known}(u,v) \lar \ii{drel}(u,v).  \\    %\ii{existDefRel}(u,v), \no\ \ii{violatedDef}(u,v). \\
\{\ii{infer}(u,v,r) : r \in \mathcal{R}^s \} \geqs 1 \: \lar  \no\ \ii{known}(u,v), \ii{toinfer}(u,v).
\ea
\eeq{eq:infer-3d}

3) We add rules similar to (\ref{eq:c1-3d}), (\ref{eq:c2-3d}) and (\ref{eq:violated}), using $\ii{infer}$ atoms instead of $\ii{rel}$ atoms, $\ii{inferViolated}$ atoms instead of $\ii{violated}$ atoms, and $\ii{existInfer}$ atoms instead of $\ii{existrel}$ atoms, to ensure the conditions (C1) and (C2) for each inferred single-tile relation.

Let $\Pi_{m,n,p}^{3,f}$ be the program obtained from $\Pi_{m,n,p}^{3}$ as described above (including $F_I$).  The atoms of the form $\ii{infer}(u,v,r)$ in an answer set for $\Pi_{m,n,p}^{3,f}$ describe {\em inferred 3D-nCDC relations}.

%%%%%%%%%%%%%%%%%%%%%%%%%%%%%%%%%%%%%%%%%%%%%%%%%%%%%%%%%%
%\vspace{-2ex}
\section{Explaining Inconsistencies in 3D-nCDC}\label{sec:source-inconsistency-3d}

If the constraint network $C$ is inconsistent, constraints are not satisfiable all together.
However, when we exclude some constraints, the network may become consistent. In that sense, the set of excluded constraints are a source of inconsistency in the original network $C$.

To find a source of inconsistency, we replace constraints (\ref{eq:violated}) with the weak constraints:
\beq
\xleftarrow{\scriptstyle\sim}  \ii{violated}(u,v), \ii{existrel}(u,v)\ [1@2,u,v] \quad (u, v \in V).
\eeq{eq:violated-weak}

\noindent According to this weak constraint, each violated 3D-nCDC constraint has a cost of 1, and the number of violated constraints are optimized with priority 2.

Let $Z$ be an answer set for the program obtained from $\Pi_{m,n,p}^{3}$ by replacing (\ref{eq:violated}) with (\ref{eq:violated-weak}).
Then, the set $E_Z$ atoms of the form $\ii{violated}(u,v)$ that appear in $Z$ describes the basic/disjunctive constraints $u\ \delta\ v$ in $C$ that are violated; furthermore if these constraints are excluded, then $C$ would be consistent. Therefore, we say that $E_Z$ provides an {\em explanation} for the inconsistency of the network $C$.

%Note that the network contain 3D-nCDC constraints which represent mandatory regulations or user's reuirements.
Note that the inconsistency might be due to the violation of mandatory constraints or users' requests/preferences.
Since the mandatory constraints cannot be changed, it might be better to explain inconsistencies in terms of the violations of user's requests/preferences, by replacing (\ref{eq:violated}) with the following weak constraints (instead of (\ref{eq:violated-weak})):
\beq
\ba l
\lar \ii{violated}(u,v), \: \ii{mandatory}(u,v), \:  \ii{existrel}(u,v) \quad (u, v \in V)   \\
\xleftarrow{\scriptstyle\sim}  \ii{violated}(u,v), \: \no\ \ii{mandatory}(u,v), \: \ii{existrel}(u,v)\ [1@2,u,v] \quad (u, v \in V).
\ea
\eeq{eq:violated-weak-mandatory}
\noindent Such an explanation is illustrated with an example in Section~\ref{sec:building}.

Note that the weak constraint above allows us to find minimal explanations. The priority of the weak constraints in (\ref{eq:violated-weak}), (\ref{eq:violated-weak-mandatory}) is higher than the priority of the weak constraints utilized by the default constraints~(\ref{eq:violated-def}), since consistency checking is prioritized.

Since the explanations are provided in terms of violations of constraints/preferences specified by the user, they can be presented to the user in an understandable format in the same way as constraints/preferences are specified. For instance, if the 3D-nCDC constraint $\ii{Director}\ O^A\ \ii{Entrance}$ is specified by the user as a request in natural language as follows ``The director's office is placed above the entrance,'' and the answer set $Z$ includes the atom $\ii{violated}(\ii{Director},\ii{Entrance})$, then an explanation for the inconsistency of the network (i.e., that the design of the building with respect to the given constraints and preferences is not possible) can be presented to the user also in natural language matching with his/her own specification: ``... because the director's office cannot be placed above the entrance.''  If the user specifies his/her requests via a graphical user interface, then the requests that cannot be fulfilled could instead be highlighted by red color.

%%%%%%%%%%%%%%%%%%%%%%%%%%%%%%%%%%%%%%%%%%%%%%%%%%%%%%%%%%
%\vspace{-2ex}
\section{Applications of \3ncdc}
\label{sec:scenarios-3d}

We discuss the usefulness of \3ncdc by three interesting real-world applications: marine explorations with an underwater human-robot team, building design and regulation in architecture, and evidence-based digital forensics.

\vspace{-2ex}
\subsection{Marine Exploration with Underwater Robots}
\label{sec:marine}

The application presented in this section is motivated by the challenges of 3D localization and natural human-robot communication in underwater robotics and marine exploration~\cite{zereik2018}. %GonzalezGarcia2020}
Below a certain depth, GPS does not function and sunlight cannot penetrate, so obtaining exact and absolute locations of objects is not possible. Topographical entities may be discontinuous and precise boundaries are often not clear, so agents need to describe rough positions of the entities in the fauna relative to one another.

Suppose that a group of researchers and underwater robots are in a mission to discover a biological habitat in ocean basin. The environment is unknown to them. During this exploration, Researcher~1 is investigating the sedimentary rock, Robot~1 is checking the fragmented marsh, which is below the sedimentary rock to its southwest and southeast, and Robot~2 is at the thermal zone, which is above the sedimentary rock to its east and southeast. Robot~2 reports the existence of a semi-active volcanic vent, located above the marsh to its northeast. Researcher~2 finds a kelp forest with two separated parts: one part is located to the north and the other part is located to the southeast of the volcanic vent, both parts are located at a lower depth. Robot~3 discovers a fungi culture to the south of the kelp forest on the same level, and to the east and below of the marsh. The fungi culture is of interest to Researcher~1 but which direction should he proceed to reach it?

The qualitative spatial information provided by the four agents can be encoded as a 3D-nCDC constraint network as follows:
$$
\ba{lll}
\ii{Marsh}\ SW^B{:}SE^B\ \ii{SedRock} &
\ii{Volcano}\ E^A{:}SE^A\ \ii{SedRock} &
\ii{Volcano}\ NE^A\ \ii{Marsh}  \\
\ii{Kelp}\ N^B{:}SE^B\ \ii{Volcano} &
\ii{Fungi}\ S^M\ \ii{Kelp} &
\ii{Fungi}\ E^B\ \ii{Marsh}.
\ea
$$

The goal is to infer the relation of the fungi culture with respect to the location of Researcher~1, the sedimentary rock. For that purpose, we consider the program $\Pi_{m,n,p}^{3}$, including a set $F_B$ of facts ~(\ref{eq:rel-3d}) describing the basic 3D-nCDC constraints above, and the fact
$\ii{toinfer}(\ii{Fungi},\ii{SedRock}).$
In every answer set for this program, atoms of the form $\ii{infer}(\ii{Fungi},\ii{SedRock},r)$ reveal a possible location of the fungi culture with respect to the sedimentary rock. For instance, one of these answer sets computed by \clingo includes $\ii{infer}(\ii{Fungi},\ii{SedRock},SEB)$, leading to the inferred 3D-nCDC constraint  $\ii{Fungi}\ SE^B\ \ii{SedRock}$.  Then, Researcher~1 can be guided towards southeast and below, to find the fungi culture.

\vspace{-2ex}
\subsection{Building Design and Regulation}
\label{sec:building}

The application presented in this section is motivated by the challenges of building design and regulations in architecture.
As argued in~\cite{borrmann2010towards}, legal requirements and official regulations together with client demands about housing, rooms and equipment inside the building are usually documented using qualitative words of daily language rather than mathematical formulas. For this reason, qualitative spatial reasoning is required.

Suppose that an architect is designing a multi-floor library building. The entrance corridor and the door are in the ground floor, and south (middle front) of the building. The regulations impose the electric panel to be on the same floor or a lower level than the entrance. The electric panel must also be situated next to the main cable, which is at the north side of the building. The system room can be on another floor, however, for ease of cabling along the shaft, it must be vertically aligned with the electric panel. The heating unit is normally instituted on a lower level, and southwest to the entrance. Moreover, the library director requests her office to be on and above the entrance corridor, for convenience of monitoring. She also requests that the system room be located to the left of her office on the same floor. The presumed location of the secretary is to the right of the director's office.
%The reading room is to the northeast of the secretary's room, but on a lower floor.
Is it possible to come up with a design of this library to respect all these constraints, requests, and assumptions?

The spatial requirements of the building design describe above can be specified by the following 3D-nCDC constraint network:
$$
\ba{lll}
\ii{Panel}\ \{ N^M, \, N^B \}\ \ii{Entrance} &
\ii{System}\ \{ O^B, \, O^M, \, O^A \}\ \ii{Panel} &
\ii{Director}\ O^A\ \ii{Entrance} \\
\ii{System}\ W^M\ \ii{Director} &
\ii{default}\ \ii{Heating}\ SW^B\  \ii{Entrance} &
\ii{default}\ \ii{Secretary}\ E^M\  \ii{Director}.
\ea
$$

With the program $\Pi_{m,n,p}^{3}$, including a set $F_B \cup F_V \cup F_D $ of facts describing the 3D-nCDC constraints above, this constraint network is found inconsistent by \clingo.
To explain this inconsistency, we utilize the method explained in Section \ref{sec:source-inconsistency-3d}: replace the constraints (\ref{eq:violated}) in $\Pi_{m,n,p}^{3}$ with the weak constraints (\ref{eq:violated-weak-mandatory}), where
$\ii{mandatory}(Panel,Entrance)$ given in the input represents an official regulation.
An answer set computed for this program by \clingo includes the atom $\ii{violated}(\ii{Director},\ii{Entrance})$, and thus provides the following explanation: the director's request about the location of her office (i.e., the 3D-nCDC constraint $\ii{Director}\ O^A\ \ii{Entrance}$) cannot be fulfilled with respect to the other desired features of the library.

\vspace{-2ex}
\subsection{Evidence-Based Digital Forensics}
\label{sec:df-application-3d}

The application presented in this section is motivated by the challenges of evidence-based digital forensics~\cite{CostantiniGO19}, that goes beyond data analysis.
We consider a fictional crime story inspired by Agatha Christie's novel ``Hercule Poirot's Christmas''. Suppose that the grandfather of the Lee family is murdered.

The police obtains some images of the crime scene from the cameras located in the house. The images yield the following information
at the moment of the crime:
\beq
\ba{lll}
\ii{Body}\ S^M:SE^M\ \ii{Table}    & \ii{Teapoy}\ E^M\ \ii{Sofa}   & \ii{Suitcase}\ \{ S^M, SW^M \}\ \ii{Table} \\
\ii{Body}\ N^M:NE^M\ \ii{Teapoy}   & \ii{Phone}\ O^A \ii{Table}    &  \ii{Sofa}\ SE^M\ \ii{Bed}\ \ \ \ \ii{Coat}\ O^M\ \ii{Hanger} .
\ea
\eeq{eq:digital}
Notice that, since some images are not clear, there is some uncertainty regarding the position of the suitcase.
Meanwhile, the detective Poirot interviews the two suspects of the crime.

Suspect 1 (Pilar):  ``... Suddenly, some noise and a scream came from upstairs. I immediately went to my grandfather's bedroom and found him dead on the floor. His body was lying in front of the table, a bit to the right. There was a rope hanging up on the window that is behind the body, which is strange. There was a muffler on top of the drawer, which probably belongs to my grandfather. The phone book on the table was open. Also, I saw a whistle and toy balloon on the floor, next to the body to its right, that is somehow peculiar... ''

Suspect 2 (Alfred):  ``... I was sitting in the guest room with Stephan. I heard a noise and then ran upstairs to my father's bedroom.
The room was untidy. Probably someone else had visited him before because I noticed a suitcase in front of the table.
I saw some drugs on the teapoy. There was a knife on the floor next to the body, to its right. It was to the front and underneath the phone..."

From Suspect 1's statement, the following 3D-nCDC constraints are obtained:
\beq
\ba{lll}
\ii{Body}\ S^M:SE^M\ \ii{Table}&
\ii{Rope}\ N^A\ \ii{Body} &
\ii{Muffler}\ O^A\ \ii{Drawer} \\
\ii{PhoneBook}\ O^A\ \ii{Table}&
\ii{Whistle}\ E^M\ \ii{Body}&
\ii{Balloon}\ E^M\ \ii{Body}.
\ea
\eeq{eq:suspect1}

From Suspect 2's statement, the following 3D-nCDC constraints are obtained:
\beq
\ba{llll}
\ii{Suitcase}\ S^M\ \ii{Table}&
\ii{Drug}\ O^A\ \ii{Teapoy} &
\ii{Knife}\ E^M\ \ii{Body}&
\ii{Knife}\ S^B\ \ii{Phone} .
\ea
\eeq{eq:suspect2}

Considering also the following commonsense knowledge about locations of objects:
\beq
\ba{lll}
\ii{default}\ \ii{Phone}\ O^A\ \ii{Table} &
\ii{default}\ \ii{Umbrella}\ O^M\ \ii{Hanger} &
\ii{default}\ \ii{Coat}\ O^M\ \ii{Hanger}.
\ea
\eeq{eq:commonsense}
the detective concludes that Suspect~1 is truthful whereas Suspect~2 is not.

The 3D-nCDC constraint network obtained from Suspect 2's statements~(\ref{eq:suspect2}), the digital evidence~(\ref{eq:digital}) and the commonsense knowledge (\ref{eq:commonsense}) is found inconsistent by \clingo, using the program $\Pi_{m,n,p}^{3}$. An explanation for this inconsistency is found by replacing the constraints (\ref{eq:violated}) in $\Pi_{m,n,p}^{3}$ with the weak constraints (\ref{eq:violated-weak}):
the atom $\ii{violated}(\ii{Knife},\ii{Phone})$ in the answer set indicates that the knife cannot be to the front and below of the phone.
%the knife cannot be to the right of the body.

\vspace{-2ex}
\subsection{Discussion}

We have presented three scenarios from different real-world applications. In each scenario,
the 3D-nCDC constraints are obtained from the qualitative directional constraints specified by agents.
The number of objects and constraints are reasonable from the perspectives of the relevant real-world applications.
Yet, for the purpose of investigating the scalability of our method, we have constructed larger scenarios with more number of objects and constraints by ``replicating'' the scenarios above multiple times. Instance $M1$ denotes the marine exploration scenario presented in Section~\ref{sec:marine}, with 5 spatial objects and 7 3D-nCDC constraints. Instances $M2$--$M4$ replicate this instance twice, three times, and four times, respectively.

We have also constructed some instances to investigate how computational performance changes when the instance is inconsistent. Instance $B1$ denotes the building design scenario presented in Section~\ref{sec:building}, with 6 spatial objects and 6 3D-nCDC constraints; it is inconsistent. Instance $B1'$ is a consistent instance obtained from $B1$ by dropping the violated 3D-nCDC constraint. Instances $B2$ and $B2'$ replicate Instances $B1$ and $B1'$ twice, respectively.
In addition, we have considered instances, $D1$ and $D2$, that describe the digital forensics scenarios presented in Section~\ref{sec:df-application-3d}, where the consistency of statements of Suspect 1 and 2 are checked, respectively.

We have measured the time and memory consumption for these consistency checking problem instances, on a workstation with 3.3GHz Intel Xeon W-2155 CPU and 32GB memory, using \clingo~5.3.0.   The results are shown in Table~\ref{tab:scenario-stats}.

We can observe from these results that, as the number of objects and constraints increase, the computation time and memory increase.

For instance, when the number of spatial variables and the number of 3D-nCDC constraints double, and the grid size increases more than $2^3$ times (from $M1$ to $M2$, $B1$ to $B2$, $B1'$ to $B2'$), the number of rules in the ground ASP program (as reported by \clingo) increases by almost 20 times. This is not surprising as the number of some rules (like (\ref{eq:project-mp-3d})) increase as many as $2^3{\times}2{=}16$ times.
Similarly, when the number of spatial variables and the number of 3D-nCDC constraints increase three times, and the grid size increases by at least $3^3$ times (from $M1$ to $M3$), the number of rules increases by almost 115 times. Such increase in the program size also causes an increase in the computation time and the memory consumption.

We also observe from Instances $B1$, $B2$ and $D2$ that the inconsistency of a network is determined in a longer time. This is not surprising either, since the search space is larger for these instances.

Note that due to Corollary~\ref{cor:correct-basic-3d} (obtained from Theorems~\ref{th:digital-equiv-3d} and~\ref{th:thm-correct-basic-3d}), our ASP method for consistency checking in 3D-nCDC is sound and complete. Therefore, in Table~\ref{tab:scenario-stats}, the solutions computed by \3ncdc for the benchmark instances are correct.

\begin{table}[t]
\caption{Experimental evaluations}
\label{tab:scenario-stats}
\begin{center}
\resizebox{0.9\textwidth}{!}{\begin{tabular}{ccccrrrr}
\hline\hline
Instance   &  $|V|$ &  $|C|$   &   Grid Size   &   \multicolumn{2}{c}{Grounding\&Total Time (sec)}      & Memory (GB)        &  \#Rules  \\
 %          &        &          &   Size   &   Time (sec) & Time (sec) & Memory (GB)  &           \\
\hline\hline
$M1$  &  5   &   7  &  $9{\times}9{\times}9$   &   0.30   &   0.34  &  $<$0.01        &   241853 \\
$M2$  &  10   &   14  &  $19{\times}19{\times}19$   &   7.98   &   10.71  &  0.77       &   5050676 \\
$M3$  &  15   &   21  &  $29{\times}29{\times}29$   &   48.82   &   68.11  &  4.18       &   27826869 \\
$M4$  &  20   &   28  &  $39{\times}39{\times}39$   &   175.33   &   227.19  &  13.79        &   91678832 \\
%\hline
$B1$ &  6   &   6  &  $11{\times}11{\times}11$   &   0.66   &   477.48  &  0.13   &     796379 \\
$B1'$  &  6   &   5  &  $11{\times}11{\times}11$   &   0.55   &   3.30  &  0.07   &     714772 \\
$B2$  &  12   &   12  &  $23{\times}23{\times}23$   &   16.27   &   $>$10000  &  2.57     &   15445966 \\
$B2'$  &  12   &   10  &  $23{\times}23{\times}23$   &   13.85   &   2174.47  &  1.48       &   13884200  \\
%\hline
$D1$ &  16   &   15  &  $31{\times}31{\times}31$   &   282.64   &   4401.02  &  3.87      &    30577147 \\
$D2$ &  13   &   13  &  $25{\times}25{\times}25$   &   82.40   &   $>$10000  &  1.71     &   13253185 \\
\hline\hline
\end{tabular}}
\end{center}
\vspace{-2\baselineskip}
\end{table}

%%%%%%%%%%%%%%%%%%%%%%%%%%%%%%%%%%%%%%%%%%%%%%%%%%%%%%%%%%
%\vspace{-2ex}
\section{Conclusion}    % Applications/
\label{sec:conclude-3d}

We have introduced a general and provably correct framework (\3ncdc) for representing the cardinal directions between (dis)connected extended objects in 3D space, by means of 3D-nCDC constraints (including default 3D-nCDC constraints), and for reasoning about these relations using Answer Set Programming, based on a discretization of the space (preserving the meaning of cardinal directions in continuous space).

\3ncdc can be used to check the consistency of a set of 3D-nCDC constraints, infer unknown cardinal direction relations, and explain source of inconsistency. It can deal with the challenges of incomplete or uncertain knowledge as well as defaults about cardinal directions between objects, as often encountered in applications.

Allowing combinations of reasoning capabilities, \3ncdc provides a flexible environment and a computational tool for various real-world applications, as illustrated by some realistic scenarios in marine explorations with an underwater human-robot team, building design and regulation in architecture, and evidence-based digital forensics.

%%%%%%%%%%%%%%%%%%%%%%%%%%%%%%%%%%%%%%%%%%%%%%%%

\vspace{-2ex}
\paragraph{\bf Acknowledgments}
We have benefited from useful discussions with Philippe Balbiani (on the use of ASP for qualitative reasoning about cardinal directions), Anthony Cohn, Volkan Patoglu and Subramanian Ramamoorthy (on applications of 3D-nCDC in robotics), Stefania Costantini (on applications of 3D-nCDC in digital forensics), and Mehdi Nourbakhsh (on applications of 3D-nCDC in building design). This work is partially supported by Cost Action CA17124.

\bibliographystyle{acmtrans}    %"\\[2pt]
%\bibliography{references_May_2020}   %\verb"}"\\[2pt]

%\input{Appendix_3D_nCDC_v12.tex}

\appendix

\section{Answer Set Programming}\label{sec:asp-review-3d}

Answer Set Programming (ASP) is a knowledge representation and reasoning paradigm~\cite{MarekT99,Niemelae99,Lifschitz02}, based on answer set semantics~\cite{GelfondL88,gelfondL91}. It provides a formal framework for declaratively solving intractable problems, like consistency checking in CDC.  The idea of ASP is to model a problem by a set of logical formulas (called rules), so that its models (called answer sets) characterize the solutions of the problem. The models can be computed by ASP solvers, like \clingo~\cite{GebserKKOSS11}.

Let us briefly describe the syntax of programs and useful constructs used in the paper.
ASP provides logical formulas, called rules, of the form
$$ %\beq
\ii{Head} \lar L_1, \dots, L_k, \no\ L_{k+1}, \dots, \no\ L_l
$$ %\eeq{eq:rule}
where $l \geq k \geq 0$, \ii{Head} is a literal (i.e., an atom $A$ or its negation $\neg A$) or $\bot$, and each $L_i$ is a literal. A rule is called a \textit{constraint} if \ii{Head} is $\bot$, and a \textit{fact} if $l=0$. A set of rules is called a \textit{program}.

ASP can express both classical negation ($\neg$) and default negation ($\no$). For example, the following rule expresses that, normally, the elevator works fine (\ii{works}) unless stated or observed otherwise that it does not work ($\neg \ii{works}$):
$$
\ii{works} \lar \no\ \neg \ii{works} .
$$

ASP provides special constructs to express nondeterministic choices, cardinality constraints, and aggregates.  Programs using these constructs can be viewed as abbreviations for programs that consist of rules of the form above.

\textit{Choice rules} provide a concise representation for nondeterministic choices, and thus allow generation of answer sets. For instance, the answer sets for the choice rule
$$\{p_1,p_2,\dots,p_5\} \lar$$
are all subsets of the set $\{p_1,p_2,\dots,p_5\}$.

\textit{Cardinality expressions} are of the form $l \{L_1,\dots,L_k\} u$ where each $L_i$ is a literal and $l$ and $u$ are
nonnegative integers denoting the lower and upper bounds. Such an expression describes the subsets of the set
$\{L_1,\dots,L_k\}$ whose cardinalities are at least $l$ and at most~$u$.  Cardinality expressions can be used in heads of choice rules; then
they generate many answer sets whose cardinality is at least $l$ and at most $u$. For instance, the choice rule
$$ % \beq
1\{p_1,p_2,\dots,p_5\}3 \lar
$$ % \eeq{eq:choice}
allows nondeterministically selecting at least 1 and at most 3 elements of the set $\{p_1,p_2,\dots,p_5\}$ to be included in an answer set.
When a cardinality expression is in body of the rules, it imposes a cardinality constraint on the number of literals.
For instance, adding the following constraint
$$
\lar 2\{p_1,p_2,\dots,p_5\}
$$
to the choice rule above %~(\ref{eq:choice})
will impose a constraint on the choice rule, and thus only subsets of $\{p_1,p_2,\dots,p_5\}$ whose cardinality is exactly one will be generated.

\textit{Schematic variables} can be used to compactly describe a group of rules, or a set of literals in a choice rule. For instance, the cardinality expression
$1\{p_1,p_2,\dots,p_5\}3$ can be represented as $1\{p(i): \ii{index}(i)\}3$, along with a definition of $\ii{index}(i)$ to describe
the ranges of variables $i$: $\ii{index}(1..3)$. The following choice rule allows nondeterministically selecting at least 1 and at most 3 numbers $x$ for every set $u$:
$$
1\{\ii{select}(u,x) : \ii{num}(x)\} 3 \lar \ii{set}(u) .
$$

ASP also provides utilities to represent \textit{aggregates}. For instance, the following rule defines the smallest number, $N$, selected so far using the aggregate \ii{min}:
$$
\ii{smallest}(N) \lar \#\ii{min}\ \{x: \ii{select}(u,x), \ii{set}(u)\} \eqs N.
$$

%%%%%%%%%%%%%%%%%%%%%%%%%%%%%%%%%%%%%%%%%%%%%%%%%%%%%%%%%

\section{An Example on Consistency Checking with Projected Constraints}
\label{sec:proj-3d}

\citeN{li2009qualitative} propose to check the consistency of a set of 3D CDC constraints, by projecting each 3D directional relation onto $xy$, $yz$, $xz$ planes, and by expressing each 3D directional relation in terms of three 2D directional relations. With this method, a basic 3D-nCDC network $C$ can be transformed into three nCDC constraint networks $C_{xy}$, $C_{yz}$, $C_{xz}$ by projecting every basic 3D-nCDC constraint onto respective plane.
If $C$ is consistent on $\Regs$, then $C_{xy}$, $C_{yz}$, $C_{xz}$ are all consistent.
However, the reverse is not necessarily true.

Consider a 3D-nCDC network
$$
\ba l
C=\{u\ NE^A:NW^A:SW^B:SE^B\ t,\ v\ SW^A:SE^A:NE^B:NW^B\ t,\\
 u\ NE^A:NW^A:SW^A:SE^A:NE^B:NW^B:SW^B:SE^B\ v\}.
\ea
$$
This network is inconsistent on $\Regs$ because $u$ and $v$ occupy 4 tiles of $t$ according to the first two constraints but the last constraint imposes $u$ to occupy 8 tiles of $v$.

\begin{figure}[h]
    \centering
    \includegraphics[width=0.2\textwidth]{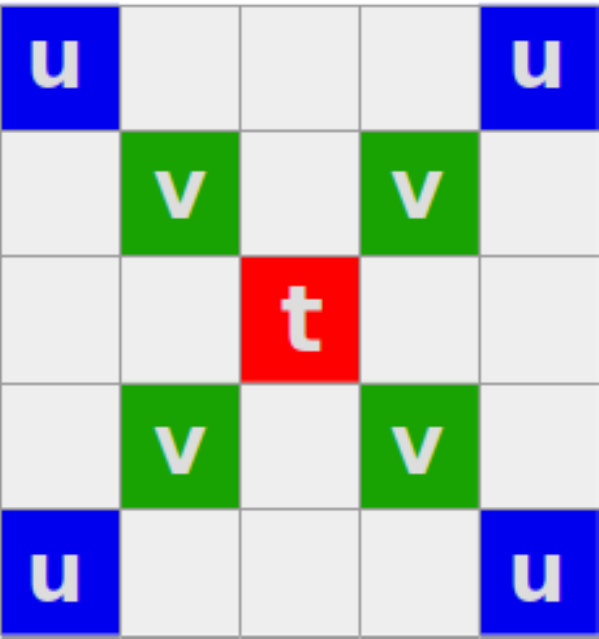}
    \caption{Solution for projected 2D networks}
    \label{fig:soln-2d}
\end{figure}

However, the projected 2D networks $C_{xy}$, $C_{yz}$, $C_{xz}$ are all consistent.
The projection of $C$ on \textit{xy,yz,xz} planes are the same:
$$C_{xy} \eqs C_{yz} \eqs C_{xz} \eqs \{u\ NE:NW:SW:SE\ t,\ v\ NE:NW:SW:SE\ t,\  u\ NE:NW:SW:SE\ v.$$
Note that $C_{xy}$, $C_{yz}$, $C_{xz}$ are consistent since the instantiation of objects in Figure~\ref{fig:soln-2d} is a solution to each of the three networks.

Therefore, projection of 3D-nCDC constraints on 2D space causes a loss of information.
This example illustrates why we consider consistency checking directly in 3D, instead of combining 2D consistency checking on projections of the network on \textit{xy,yz,xz} planes.

%%%%%%%%%%%%%%%%%%%%%%%%%%%%%%%%%%%%%%%%%%%%%%%%%%%%%%%%%

\section{An Example on Qualitative Reasoning about Inverses of Directional Relations}
\label{sec:inverse}

Qualitative directional relations in 3D are used in robotics. For instance, \citeN{zampogiannis2015learning} define six directional relations (i.e., \ii{left}, \ii{right}, \ii{front}, \ii{behind}, \ii{below}, \ii{above}) between point clouds in 3D by utilizing axis-aligned bounding boxes and 3D cones defined with respect to these boxes, for the purpose of grounding. \citeN{mota2018incrementally} consider a variation of \citeN{zampogiannis2015learning}'s definitions of the six directional relations where ``the spatial relation of an object with respect to a reference object is determined by the non-overlapping pyramid around the reference that has most of the point cloud of the object.''
However, such related work in robotics do not study reasoning problems, like consistency checking or inference of (missing) relations (e.g., compositions or inverses), in the spirit of the well-studied qualitative spatial reasoning calculi. The lack of formal studies on such reasoning problems might lead to incorrect conclusions and inferences.

%For instance, according to \citeN{zampogiannis2015learning}'s definitions of these six directional relations, it is not always correct that, for every two objects $A$ and $B$, $A$ is \ii{below} $B$ iff $B$ is \ii{above}~$A$.
%Consider a scenario where we are given that $A$ is to the \ii{right} of $B$, $B$ is to the \ii{left} of $A$, and $B$ is \ii{below} $A$; a projection of these relations on $xz$ plane are shown in Figure~\ref{fig:inverse1} for a better understandability.
%In this example, it will be incorrect to infer that $A$ is \ii{above} $B$, according to \citeN{mota2018incrementally}'s ASP rule:
%$$\ii{holds}(\ii{above}(A,B),I) \lar \ii{holds}(\ii{below}(B,A),I).$$
%
%\begin{figure}[t]   % t!
%    \centering
%    \includegraphics[width=0.5\textwidth]{inverse2.pdf}
%    \vspace{-\baselineskip}
%    \caption{An example: The projection of objects $A$ and $B$ on $xz$ plane.}
%    \label{fig:inverse1}
%    %\vspace{-\baselineskip}
%\end{figure}
%

%\citeN{mota2018incrementally} consider a variation of \citeN{zampogiannis2015learning}'s definitions of the six directional relations where ``the spatial relation of an object with respect to a reference object is determined by the non-overlapping pyramid around the reference that has most of the point cloud of the object.''
For instance, consider two point clouds $A$ and $B$. Consider also directional relations as defined by \citeN{mota2018incrementally}. Suppose that we are given that $B$ is \ii{below}~$A$, and $A$ is to the \ii{right} of $B$. For simplicity of presentation, the projection of these relations on $xz$ plane are shown in Figure~\ref{fig:inverse2}. In this example, it will be incorrect to infer that $A$ is \ii{above} $B$ according to \citeN{mota2018incrementally}'s ASP rule:
$$\ii{holds}(\ii{above}(A,B),I) \lar \ii{holds}(\ii{below}(B,A),I).$$
This ASP rule (and the ASP program that includes this rule) is not correct from the qualitative spatial reasoning point of view, with respect to the definitions of directional relations~\cite{mota2018incrementally}.

\begin{figure}[t]   % t!
    \centering
    \includegraphics[width=0.5\textwidth]{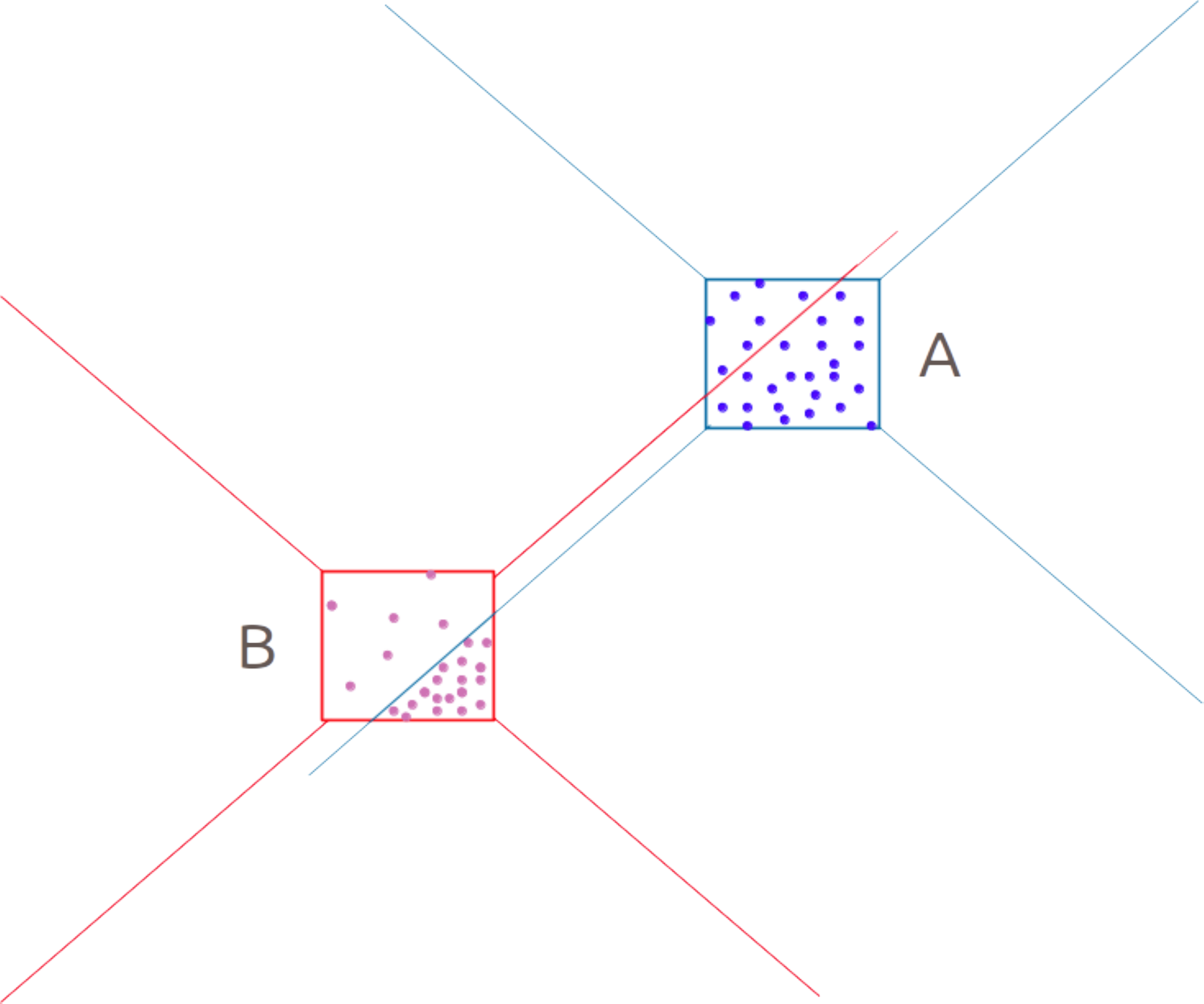}
    \vspace{-\baselineskip}
    \caption{Another example: The projection of objects $A$ and $B$ on $xz$ plane.}
    \label{fig:inverse2}
    %\vspace{-\baselineskip}
\end{figure}

This example illustrates that, although introducing qualitative spatial relations may be sufficient for low-level tasks in robotics like grounding, further formal studies are required about reasoning problems, like consistency checking or inference of relations, for correct high-level reasoning in robotics. Furthermore, the correctness of formulations over qualitative spatial relations also needs to be investigated to prevent unsound inferences. In that sense, \3ncdc provides a provably correct method and tool for reasoning about 3D cardinal directions, that robotics studies can benefit from.

%%%%%%%%%%%%%%%%%%%%%%%%%%%%%%%%%%%%%%%%%%%%%%%%%%%%%%%%%

\section{ASP Representation of the Example Scenarios}

\subsection{Marine Exploration with Underwater Robots}

{\small
\begin{verbatim}
% Spatial objects are possibly disconnected objects in 3D space
% 1:sedimentary rock   2:fragmented marsh   3:volcanic vent
% 4:kelp forest   5:fungi culture

object(1..5).

% 27 tiles of 3D-nCDC
alltiles(swm).  alltiles(sm).  alltiles(sem).  alltiles(wm).
alltiles(om).   alltiles(em).  alltiles(nwm).  alltiles(nm).
alltiles(nem).  alltiles(swb). alltiles(sb).   alltiles(seb).
alltiles(wb).   alltiles(ob).  alltiles(eb).   alltiles(nwb).
alltiles(nb).   alltiles(neb). alltiles(swa).  alltiles(sa).
alltiles(sea).  alltiles(wa).  alltiles(oa).   alltiles(ea).
alltiles(nwa).  alltiles(na).  alltiles(nea).

% Define network of constraints in 3D-nCDC
relation(2,1,swb).  relation(2,1,seb).
relation(3,1,ea).  relation(3,1,sea).
relation(3,2,nea).
relation(4,3,nb).  relation(4,3,seb).
relation(5,4,sm).
relation(5,2,eb).

% Infer Unknown Relation
toinfer(5,1).
\end{verbatim}
}

\subsection{Building Design and Regulation}

{\small
\begin{verbatim}
% Spatial objects are connected objects in 3D space
% 1:entrance   2:electric panel   3:system room
% 4:heating unit   5:director office   6:secretary room

object(1..6).

% 27 tiles of 3D-nCDC
alltiles(swm).  alltiles(sm).  alltiles(sem).  alltiles(wm).
alltiles(om).   alltiles(em).  alltiles(nwm).  alltiles(nm).
alltiles(nem).  alltiles(swb). alltiles(sb).   alltiles(seb).
alltiles(wb).   alltiles(ob).  alltiles(eb).   alltiles(nwb).
alltiles(nb).   alltiles(neb). alltiles(swa).  alltiles(sa).
alltiles(sea).  alltiles(wa).  alltiles(oa).   alltiles(ea).
alltiles(nwa).  alltiles(na).  alltiles(nea).

% Define network of constraints in 3D-nCDC
disjrelation(2,1,1,nm).  disjrelation(2,1,2,nb).
disjrelation(3,2,1,om).  disjrelation(3,2,2,ob).
disjrelation(3,2,3,oa).
relation(5,1,oa).
relation(3,5,wm).

%  Commonsense Knowledge
defaultrelation(4,1,swb).
defaultrelation(6,5,em).

% Mandatory Constraints
mandatory(2,1).
\end{verbatim}
}

\subsection{Evidence-Based Digital Forensics}

{\small
\begin{verbatim}
% Spatial objects are connected objects in 3D space
% 1:Body  2:Table  3:Chair  4:Teapoy  5:Drawer
% 6:Sofa  7:Suitcase  8:Hanger  9:Coat  10:Hat
% 11:Muffler  12:Phonebook  13:Knife  14:Drug
% 15:Rope  16:Ballon  17:Whistle 18:bed 19:phone
% 20:umbrella

object(1..20).

% 27 tiles of 3D-nCDC
alltiles(swm).  alltiles(sm).  alltiles(sem).  alltiles(wm).
alltiles(om).   alltiles(em).  alltiles(nwm).  alltiles(nm).
alltiles(nem).  alltiles(swb). alltiles(sb).   alltiles(seb).
alltiles(wb).   alltiles(ob).  alltiles(eb).   alltiles(nwb).
alltiles(nb).   alltiles(neb). alltiles(swa).  alltiles(sa).
alltiles(sea).  alltiles(wa).  alltiles(oa).   alltiles(ea).
alltiles(nwa).  alltiles(na).  alltiles(nea).

% Murder Data
relation(1,2,sm).  relation(1,2,sem).
relation(1,4,nm).  relation(1,4,nem).
disjrelation(7,2,1,sm).  disjrelation(7,2,2,swm).
relation(19,2,oa).
relation(6,18,sem).
relation(4,6,em).
relation(9,8,om).

% Commonsense Knowledge:
defaultrelation(19,2,oa).
defaultrelation(20,8,om).
defaultrelation(9,8,om).

% Suspect 1 Statement
relation(15,1,na).
relation(11,5,oa).
relation(12,2,oa).
relation(17,1,em).
relation(16,1,em).

% Suspect 2 Statement
relation(7,2,sm).
relation(14,4,oa).
relation(13,1,em).
relation(13,19,sb).
\end{verbatim}
}

\section{Proof of Theorem~\ref{th:complexity-3d}}

%\begin{theorem}\label{th:complexity-3d}
%	If $C$ is an incomplete basic 3D-nCDC network, or $C$ is a 3D-nCDC network that includes disjunctive 3D-nCDC constraints over $D \eqs \Regs$, then $I\eqs(C,V,D,Q)$ is an NP-complete problem.
%\end{theorem}
%
%\vspace{0.18in}

%\begin{proof}[Proof of Theorem~\ref{th:complexity-3d}]
%\label{prf:complexity-3d-a}
Consider two cases: $C$ is an incomplete basic 3D-nCDC network, or $C$ includes disjunctive 3D-nCDC constraints.

\paragraph{\bf Case 1: $C$ is an incomplete basic 3D-nCDC network.}
%We first consider the case where $C$ is an incomplete basic 3D-nCDC network over $D \eqs \Regs$.
We prove NP-membership and NP-hardness of $I\eqs(C,V,D,Q)$ as follows.

\smallskip\noindent{NP-membership:} $C$ includes at most $|V|(|V|-1)$ constraints. Testing a 3D-nCDC constraint between a pair of objects takes  $O(1)$ time. So, given a candidate solution $A = (a_i)^l_{i=1}$ of $I$, it takes
$O(|V|^2)$ time to verify all constraints in $C$. Hence, $I \ins NP$.

\smallskip\noindent{NP-hardness:} We reduce the 2D CDC consistency checking problem to the 3D CDC consistency checking problem.

Note that, according to Theorem 5.8 of~\citeN{Liuthesis2013}, consistency checking of an incomplete basic network of 2D CDC constraints over set of (possibly) disconnected objects in $\mathbb{R}^2$ is NP-complete.

Take an arbitrary instance $I'\eqs(C',V,D',Q')$ of 2D CDC consistency checking problem, where the network $C'$ consists of basic 2D CDC constraints, $D'$ is the set of (possibly) disconnected objects in $\mathbb{R}^2$, and $Q'$ is the set of all basic 2D CDC relations.
We reduce $I'$ to the following specific instance $I\eqs(C,V,D,Q)$ of 3D CDC consistency checking problem.
The set $V$ of spatial variables stays the same. For every basic 2D CDC constraint $u\ R_1:...:R_k \ v$ in $C'$, we insert the corresponding basic 3D-nCDC constraint $ u\ R^{M}_1:...:R^{M}_k \ v $ into $C$.
Namely, the 2D constraints are assumed to be on the middle level of $z$ axis and thereby transformed into 3D constraints.
Since a basic constraint in $C'$ can have at most 9 tiles, this reduction takes $O(|C'|)$ time, which is polynomial in input size.

Next, we prove that this reduction is correct. For this, we show that the answer of $I'$ is Yes if and only if the answer of $I$ is Yes.
First, suppose the answer of $I'$ is Yes and there exists a solution $A' = (a'_i)^l_{i=1}$ of $I'$.
That is, $a'_i, a'_j \ins A'$ satisfies the basic 2D CDC constraint $u_i\ R_{ij,1}:...:R_{ij,k} \ u_j$ in $C'$.
Using $A'$, we construct another instantiation $A \eqs (a_i)^l_{i=1}$ which is a solution of $I$:
We stretch every planar object $a'_i \ins A'$ along $z$ dimension by an amount $\kappa \gts 0$ in a manner that all objects accommodate the range [0,$\kappa$] on $z$ axis. With this method, we create 3D objects $A \eqs (a_i)^l_{i=1}$ from 2D objects $(a'_i)^l_{i=1}$ such that the projection of each $a_i$ on the $xy$ plane is equal to $a'_i$ ($1 \leqs i \leqs l$). Since all objects in $A$ are aligned on the $z$ axis, a pair $(a_i, a_j)$ in $A$ satisfies the 3D CDC constraint $u_i\ R^{M}_{ij,1}:...:R^{M}_{ij,k} \ u_j$ in $C$.
Thus, $A$ satisfies $C$ and the answer of $I$ is Yes.
For the reverse direction, suppose the answer of $I$ is Yes and there exists a solution $A = (a_i)^l_{i=1}$ of $I$.
Then $A$ satisfies every 3D CDC constraint $u_i\ R^{M}_{ij,1}:...:R^{M}_{ij,k} \ u_j$ in $C$.
We construct a solution $A' \eqs (a'_i)^l_{i=1}$ of $I'$ using $A$:
we project each $a_i \ins A$, $1 \leqs i \leqs l$ onto $xy$ plane and designate the projection as a planar object $a'_i$. This way, a 2D instantiation $A' = (a'_i)^l_{i=1}$ is formed.
Note that $A'$ satisfies every 2D CDC constraint $u_i\ R_{ij,1}:...:R_{ij,k} \ u_j$ in $C'$ by construction.
Consequently, $A'$ is a solution of $I'$ and the answer of $I'$ is Yes.
This means $I'$ and $I$ have the same answers, and thus concludes the proof of NP-hardness of $I$.

\paragraph{\bf Case 2: $C$ includes disjunctive 3D-nCDC constraints.}
%For the second part of the theorem, suppose $C$ is a 3D-nCDC constraint network over $D \eqs \Regs$, i.e., $C$ might contain disjunctive %constraints.
The proof of NP-membership of $I$ is the same as in the first case.
To prove NP-hardness, we reduce the 2D CDC consistency checking problem to the 3D CDC consistency checking problem.

Note that consistency checking of a network of (possibly disjunctive) 2D CDC constraints over set of (possibly) disconnected objects in $\mathbb{R}^2$  is NP-complete by Theorem 6 of \cite{SkiadKoub2005}.

Take an arbitrary instance $I'\eqs(C',V,D',Q')$ of 2D CDC consistency checking problem,  where $C'$ consists of basic and disjunctive 2D CDC constraints, $D'$ is the set of (possibly) disconnected objects in $\mathbb{R}^2$, and $Q'$ is the set of all 2D CDC relations.
We reduce $I'$ to the following specific instance $I\eqs(C,V,D,Q)$ of 3D CDC consistency checking problem.
For a basic 2D CDC constraint $u\ R_1:...:R_k \ v$ in $C'$, we insert the basic 3D-nCDC constraint $ u\ R^{M}_1:...:R^{M}_k \ v $ into $C$. For a disjunctive 2D CDC constraint $u\ \{ \delta_1, ..., \delta_k \} \ v$, we mark tiles of every disjunct (basic relation) $\delta_i$ on middle level of $z$ axis and insert the new disjunctive 3D-nCDC constraint into $C$.
Since a disjunctive constraint in $C'$ can have at most $2^9-1$ disjuncts and a basic 2D CDC relation can have at most 9 tiles, running time of this reduction is $O(|C'|)$, which is polynomial in input size.

Next, we prove the correctness of this reduction.
Suppose the answer of $I'$ is Yes and there exists a solution $A' = (a'_i)^l_{i=1}$ of $I'$.
The instantiation $A'$ satisfies every basic or disjunctive 2D CDC constraint in $C'$.
We construct an instantiation $A \eqs (a_i)^l_{i=1}$ of 3D objects, similar to the construction in the first part of the theorem.
Each planar object $a'_i \ins A'$ is elongated along $z$ dimension by an amount $\kappa \gts 0$ to form a 3D object $a_i$ which accommodates the range [0,$\kappa$] on $z$ axis.
Note that $a'_i, a'_j \ins A'$ satisfies a basic 2D CDC constraint or a disjunct of a disjunctive constraint $u_i\ R_{ij,1}:...:R_{ij,k} \ u_j$ in $C'$. Therefore, the pair $a_i, a_j$ in $A$ satisfies the corresponding basic 3D-nCDC constraint or disjunct $u_i\ R^{M}_{ij,1}:...:R^{M}_{ij,k} \ u_j$ in $C$. Thus $A$ satisfies $C$ and the answer of $I$ is Yes.

For the reverse direction, suppose the answer of $I$ is Yes and there exists a solution $A = (a_i)^l_{i=1}$ of $I$.
Note that $A$ satisfies every basic or disjunctive 3D CDC constraint in $C$.
A solution $A' \eqs (a'_i)^l_{i=1}$ of $I'$ is formed using $A$ as follows.
Each object $a_i \ins A$, $1 \leqs i \leqs l$ is projected onto $xy$ plane and the projection is designated as a planar object $a'_i$, similar to first part.
Since $a_i, a_j \ins A$ satisfies a basic 3D-nCDC constraint or a disjunct of a disjunctive constraint $u_i\ R^{M}_{ij,1}:...:R^{M}_{ij,k} \ u_j$ in $C$, the pair $(a'_i, a'_j)$ in $A'$ satisfies the corresponding basic 2D CDC constraint or a disjunct $u_i\ R_{ij,1}:...:R_{ij,k} \ u_j$ in $C'$.
This way, a 2D instantiation $A' = (a'_i)^l_{i=1}$ that satisfies $C'$ is constructed. Hence, the answer of $I'$ is Yes.
We conclude that answers of $I'$ and $I$ are the same, and thus the proof of NP-hardness of $I$.

%\end{proof}

%%%%%%%%%%%%%%%%%%%%%%%%%%%%%%%%%%%%%%%%%%%%%%%%%%%%%%%%%

\section{Proof of Theorem~\ref{th:digital-equiv-3d}}

%\begin{theorem}\label{th:digital-equiv-3d}
%The consistency checking problem $I\eqs(C,V,D,Q)$ over $D \eqs \Regs$
%and the discretized consistency problem $I_{m,n,p}\eqs(C,V,D_{m,n,p},Q)$ where $m,n,p \geqs 2|V|-1$ have the same answers.
%\end{theorem}
%
%
%\vspace{0.18in}

%\begin{proof}[Proof of Theorem~\ref{th:digital-equiv-3d}]
%\label{prf:digital-equiv-3d-a}
We show that the answer of $I\eqs(C,V,D,Q)$ is Yes if and only if the answer of $I_{m,n,p}\eqs(C,V,D_{m,n,p},Q)$, $m,n,p \geqs 2|V|-1$ is Yes.

\paragraph{Right to left.} Suppose that the answer of the discretized problem $I_{m,n,p}$ is Yes.
Using a solution $A' = (a'_i)^l_{i=1}$ of $I_{m,n,p}$, we construct a solution $A = (a_i)^l_{i=1}$ of $I$ as follows:
Origin of the prism (3-dimensional grid) is viewed as the origin of $\mathbb{R}^3$, grid cells will be converted into closed cubes in $\mathbb{R}^3$ and the object $a_i$ on the Euclidean space is equal to the union of cubes occupied by $a'_i$.
By construction, $A = (a_i)^l_{i=1}$ satisfies $C$ and the answer of $I$ is Yes.

\paragraph{Left to right.} Suppose that the answer of $I$ is Yes.
Take any solution $A = (a_i)^l_{i=1}$ of $I$ in $\Regs$. Note that objects might be disconnected.
$A$ satisfies basic 3D-nCDC constraints in $C$.
Since objects are compact sets in $\mathbb{R}^3$, they are bounded.

We identify axes-aligned minimum bounding box of every object in $A$ and then order their bounds.
Let $ B_x \eqs ( \inf_x(a_i), \sup_x(a_i) \: : \:  1\leqs i \leqs l ) $, $ B_y \eqs ( \inf_y(a_i), \sup_y(a_i) \: : \:  1\leqs i \leqs l ) $,  \\
$ B_z \eqs ( \inf_z(a_i), \sup_z(a_i) \: : \:  1\leqs i \leqs l ) $
be ascending ordered list of infimum and supremums of these objects over respective axis.
Note that numbers in an ordered list may not be all distinct because infimum/supremum of an object might coincide with infimum/supremum of another object in $A$.
These bounds in $B_x$, $B_y$, $B_z$ partition the Euclidean space into cubic zones.

We build a 3-dimensional grid (prism) of size $m \times n \times p$ using zones created by $B_x$, $B_y$, $B_z$, as below.
The cubic zones whose coordinates are less than the minimum element or greater than the maximum element of respective list ($B_x$, $B_y$ or $B_z$) are omitted so that we restrict attention to only the zones which might be occupied by some object in $A$.

We will construct an instantiation $A' = (a'_i)^l_{i=1}$ on the prism which satisfies $C$.
The indices of the prism on x axis corresponds to the respective element of $ B_x$ in ascending order, namely $i^{th}$ index of the prism on $x$ axis is the $i^{th}$ element of $B_x$.
In case infimum/supremum of multiple objects coincide, they correspond to the same index on the prism.
An analogous indexing scheme is applied to the $y$ and $z$ axes.
Observe that there is a 1-1 correspondence between the abovementioned cubic zones and the grid cells.

We form discrete objects $(a'_i)^l_{i=1}$ over the prism as follows:
Objects on the prism are set of cells and they can be disconnected.
If an object $a_i$ occupies a positive volume on a cubic zone, we assign the corresponding grid cell to $a'_i$.
The same cell can be allocated to multiple objects.
Recall that zero volume components (i.e. individual points, lines, surfaces) do not alter 3D-nCDC relations.
This manner ordering of infimum and supremums of objects  $(a'_i)^l_{i=1}$ on the grid are the same of original objects $(a_i)^l_{i=1}$ on the Euclidean space. Consequently, orientation of the minimum bounding box of the objects and occupied tiles stay the same.
Therefore 3D-nCDC relations between $(a'_{i}, a'_{j})$ are the same as $(a_{i}, a_{j})$.
Hence, $A' = (a'_i)^l_{i=1}$ also satisfies $C$ and the answer of $I_{m,n,p}$ is Yes.

The size of the prism (number of cells) on each axis is equal to the number of indices on that axis, less 1.
Since there can be at most $2|V|$ distinct elements in $B_x$, $B_y$, $B_z$, the prism can have a maximum of $2|V|-1$ cells on each axes.
Namely, a solution of $I$ can be constructed on a grid of size $m \eqs n \eqs p \eqs 2|V|-1$ or larger.

%\end{proof}

%%%%%%%%%%%%%%%%%%%%%%%%%%%%%%%%%%%%%%%%%%%%%%%%%%%%%%%%%

\section{Proof of Theorem~\ref{th:thm-correct-basic-3d}}

The proof of Theorems~\ref{th:thm-correct-basic-3d},~\ref{th:asp-correct-disj-3d} uses the following results in~\cite{Erdogan2004}.

\vspace{0.15in}

\noindent{\bf Splitting Set Theorem~\cite{Erdogan2004}}. Let $U$ be a splitting set for a program $\Pi$. A consistent set of literals is an answer set for $\Pi$ if it can be written as $X\cup Y$ where $X$ is an answer set for $b_U(\Pi)$ and $Y$ is an answer set for $e_U(\Pi \setminus b_U(\Pi), X)$.

Intuitively, the bottom part $b_U(\Pi)$ of a program $\Pi$ consists of the rules whose literals are contained in the splitting set $U$. Once an answer set $X$ for the bottom part is computed, it is ``propagated’’ to the rest of the program (called the top part) and the answer set $Y$ is computed for the top part. The theorem ensures that $X \cup Y$ is an answer set for the whole program.

\vspace{0.15in}

\noindent{\bf Proposition~2 of~\cite{Erdogan2004}}. For any program $\Pi$ and formula $F$, a set $Z$ of literals is an
answer set for $\Pi \cup \{\lar F\}$ if $Z$ is an answer set for $\Pi$ and does not satisfy $F$.

Intuitively, Proposition~2 of~\cite{Erdogan2004} express that adding constraints to an ASP program eliminates its answer sets that violate these constraints.

\vspace{0.28in}
%
%\begin{theorem}\label{th:thm-correct-basic-3d}
%	Let $I_{m,n,p} \eqs (C,V,D_{m,n,p},Q)$ be a discretized consistency problem, where $C$ is a basic 3D-nCDC network.
%	For an assignment $X$ of spatial objects in $D_{m,n,p}$ to variables in $V$, $X$ is a solution of $I_{m,n,p}$ if and only if $X$ can be represented in the form of $X \eqs Z \cap \mathcal{O}_{m,n,p}$ for some answer set $Z$ of $\Pi_{m,n,p}^{1}$.
%	Moreover, every solution of $I_{m,n,p}$ can be represented in this form in only one way.
%\end{theorem}
%
%\vspace{0.18in}

\begin{proof}[Proof of Theorem~\ref{th:thm-correct-basic-3d}]
\label{prf:thm-correct-basic-3d-a}
The correctness proof for the whole program consists of three parts, considering the rules for the input network, the rules for generating minimum bounding box and instantiation of objects, and the rules for 3D-nCDC constraints. It is followed by the uniqueness proof.

\paragraph{\bf Correctness proof}
\medskip\noindent{\underline{Rules for the input network:}}
Every answer set for the subprogram~$(\ref{eq:rel-3d})$ characterizes the basic 3D-nCDC constraints as the input of the consistency problem, and every answer set for the subprogram~$(\ref{eq:existrel-3d})$ shows pairs of variables for which a constraint exists in the network with $\ii{existrel}(u,v)$ atoms.

\medskip\noindent{\underline{Rules for MBB and instantiation of objects:}}
We first consider the subprogram $\Pi^{1,a}_{m,n,p}$ that consists of the set $F_B$ of facts $(\ref{eq:rel-3d})$, the rules of the form $(\ref{eq:existrel-3d})$, the rule $(\ref{eq:generate-infsup-3d})$, and rules analogous to $(\ref{eq:generate-infsup-3d})$ that describe $\ii{inf}_{y}(u, \underline{y})$, $\ii{sup}_{y}(u, \overline{y})$, $\ii{inf}_{z}(u, \underline{z})$, $\ii{sup}_{z}(u, \overline{z})$.
We apply the splitting set theorem~\cite{Erdogan2004} to $\Pi^{1,a}_{m,n,p}$:
The set of all $\ii{inf}_{x}(u, \underline{x})$, $\ii{sup}_{x}(u, \overline{x})$, $\ii{inf}_{y}(u, \underline{y})$, $\ii{sup}_{y}(u, \overline{y})$, $\ii{inf}_{z}(u, \underline{z})$, $\ii{sup}_{z}(u, \overline{z})$ atoms is a splitting set for $\Pi^{1,a}_{m,n,p}$.
The bottom part is the rule $(\ref{eq:generate-infsup-3d})$ and rules analogous to $(\ref{eq:generate-infsup-3d})$.
An answer set $Y_1$ for the bottom part describes a possible choice of minimum bounding box of each spatial variable in terms of $\ii{inf}_{x}(u, \underline{x})$, $\ii{sup}_{x}(u, \overline{x})$, $\ii{inf}_{y}(u, \underline{y})$, $\ii{sup}_{y}(u, \overline{y})$, $\ii{inf}_{z}(u, \underline{z})$, $\ii{sup}_{z}(u, \overline{z})$ atoms.
An answer set $Y_2$ for the top part $(\ref{eq:rel-3d}) \cup (\ref{eq:existrel-3d})$  evaluated with respect to $Y_1$ describes 3D-nCDC constraints in $C$, pair of objects which have a constraint in $C$; and $Y_1 \cup Y_2$ is an answer set for $\Pi^{1,a}_{m,n,p}$.

The rule $(\ref{eq:infsup-ineq-3d})$ and analogous rules for $\ii{inf}_{y}(u, \underline{y})$, $\ii{sup}_{y}(u, \overline{y})$, $\ii{inf}_{z}(u, \underline{z})$, $\ii{sup}_{z}(u, \overline{z})$ insist on chosen infimum value to be less than or equal to supremum.
Proposition~2 of~\cite{Erdogan2004} implies that adding rule $(\ref{eq:infsup-ineq-3d})$ and rules analogous to $(\ref{eq:infsup-ineq-3d})$, the answer sets for $\Pi^{1,a}_{m,n,p}$ that do not have a valid minimum bounding box of an object are eliminated.
Thereby, answer sets of subprogram
$\Pi^{1,b}_{m,n,p}$ which consists of $\Pi^{1,a}_{m,n,p}$, the rule $(\ref{eq:infsup-ineq-3d})$ and analogous rules to $(\ref{eq:infsup-ineq-3d})$ represent 3D-nCDC constraints in $C$ and a valid choice of $\ii{inf}_{x}(u, \underline{x})$, $\ii{sup}_{x}(u, \overline{x})$, $\ii{inf}_{y}(u, \underline{y})$, $\ii{sup}_{y}(u, \overline{y})$, $\ii{inf}_{z}(u, \underline{z})$, $\ii{sup}_{z}(u, \overline{z})$ atoms.

Now we examine the subprogram $\Pi^{1,c}_{m,n,p} \eqs \Pi^{1,b}_{m,n,p} \cup (\ref{eq:generate-3d})$.
According to the splitting set theorem, the set $F_B$ of facts in $(\ref{eq:rel-3d})$
and the set of all $\ii{existrel}(u,v)$, $\ii{inf}_{x}(u, \underline{x})$, $\ii{sup}_{x}(u, \overline{x})$, $\ii{inf}_{y}(u, \underline{y})$, $\ii{sup}_{y}(u, \overline{y})$, $\ii{inf}_{z}(u, \underline{z})$, $\ii{sup}_{z}(u, \overline{z})$ atoms is a splitting set for $\Pi^{1,c}_{m,n,p}$.
The bottom part of $\Pi^{1,c}_{m,n,p}$ is $\Pi^{1,b}_{m,n,p}$ and the top part is $(\ref{eq:generate-3d})$.
An answer set $Y_3$ for the bottom part describes 3D-nCDC constraints in $C$ and a valid choice of bounds $\ii{inf}_{x}(u, \underline{x})$, $\ii{sup}_{x}(u, \overline{x})$, $\ii{inf}_{y}(u, \underline{y})$, $\ii{sup}_{y}(u, \overline{y})$, $\ii{inf}_{z}(u, \underline{z})$, $\ii{sup}_{z}(u, \overline{z})$ for every spatial variable.
An answer set $Y_4$ for the top part evaluated with respect to $Y_3$ describes a possible instantiation $A=(a_{i})^{l}_{i=1}$ of objects to variables in $V$ and $Y_3 \cup Y_4$ is an answer set for $\Pi^{1,c}_{m,n,p}$.

In the next step, we add rules of the form $(\ref{eq:project-mp-3d})$  and rules analogous to $(\ref{eq:project-mp-3d})$ for $y$, $z$ axes into $\Pi^{1,c}_{m,n,p}$ to form subprogram $\Pi^{1,d}_{m,n,p}$.
The set $F_B$ of facts in $(\ref{eq:rel-3d})$
and the set of all $\ii{occ}(u,x,y,z)$, $\ii{existrel}(u,v)$, $\ii{inf}_{x}(u, \underline{x})$, $\ii{sup}_{x}(u, \overline{x})$, $\ii{inf}_{y}(u, \underline{y})$, $\ii{sup}_{y}(u, \overline{y})$, $\ii{inf}_{z}(u, \underline{z})$, $\ii{sup}_{z}(u, \overline{z})$ atoms is a splitting set for $\Pi^{1,d}_{m,n,p}$.
The bottom part of $\Pi^{1,d}_{m,n,p}$ is $\Pi^{1,c}_{m,n,p}$ and an answer set $Y_5$ for the bottom part describes 3D-nCDC constraints in $C$, a valid choice of bounds $\ii{inf}_{x}(u, \underline{x})$, $\ii{sup}_{x}(u, \overline{x})$, $\ii{inf}_{y}(u, \underline{y})$, $\ii{sup}_{y}(u, \overline{y})$, $\ii{inf}_{z}(u, \underline{z})$, $\ii{sup}_{z}(u, \overline{z})$ for every variable $u\in V$ and an instantiation of objects to every variable.
The top part is the rule $(\ref{eq:project-mp-3d})$ and rules analogous to $(\ref{eq:project-mp-3d})$ for $\ii{yocc}(u,y)$, $\ii{zocc}(u,z)$ atoms.
An answer set $Y_6$ for the top part evaluated with respect to $Y_5$ indicates the projection of each generated object over $x,y,z$ axes with $\ii{xocc}(u,x)$, $\ii{yocc}(u,y)$, $\ii{zocc}(u,z)$ atoms; and $Y_5 \cup Y_6$ is an answer set for $\Pi^{1,d}_{m,n,p}$.

The rules $(\ref{eq:cells-inside-mbr-3d})$  and analogous rules for $y,z$ axes impose cells of every object are generated inside its minimum bounding box. Rule $(\ref{eq:infsup-existcell-3d})$ and analogous rules impose at least one cell has been generated on infimum and supremum on every axes to make sure that correct values have been chosen.
Inserting the rules $(\ref{eq:cells-inside-mbr-3d})$ and $(\ref{eq:infsup-existcell-3d})$, and analogous rules for $y$, $z$ axes into $\Pi^{1,d}_{m,n,p}$ eliminates answer sets of $\Pi^{1,d}_{m,n,p}$ that do not obey these criteria.
Thus, we form the subprogram $\Pi^{1,e}_{m,n,p}$ which is composed of $\Pi^{1,d}_{m,n,p}$, the rules $(\ref{eq:cells-inside-mbr-3d})$, $(\ref{eq:infsup-existcell-3d})$ and the rules analogous to $(\ref{eq:cells-inside-mbr-3d})$, $(\ref{eq:infsup-existcell-3d})$ for $\ii{inf}_{y}(u, \underline{y})$, $\ii{sup}_{y}(u, \overline{y})$, $\ii{inf}_{z}(u, \underline{z})$, $\ii{sup}_{z}(u, \overline{z})$.
An answer set $Y_7$ of subprogram $\Pi^{1,e}_{m,n,p}$ represents 3D-nCDC constraints in $C$, a possible instantiation $A=(a_{i})^{l}_{i=1}$ of variables in $V$ and the correct bounds $\ii{inf}_{x}(u, \underline{x})$, $\ii{sup}_{x}(u, \overline{x})$, $\ii{inf}_{y}(u, \underline{y})$, $\ii{sup}_{y}(u, \overline{y})$, $\ii{inf}_{z}(u, \underline{z})$, $\ii{sup}_{z}(u, \overline{z})$ of objects.

\medskip\noindent{\underline{Rules for 3D-nCDC constraints:}}
Rules $(\ref{eq:c1-3d})$ and rules $(\ref{eq:c2-3d})$ find out whether the instantiation of objects $A$ violates 3D CDC tile constraints (C1) and (C2), respectively.
We consider the subprogram $\Pi^{1,f}_{m,n,p} \eqs \Pi^{1,e}_{m,n,p} \cup (\ref{eq:c1-3d}) \cup (\ref{eq:c2-3d})$.
A splitting set for $\Pi^{1,f}_{m,n,p}$ is the set $F_B$ of facts in $(\ref{eq:rel-3d})$
and the set of all $\ii{occ}(u,x,y,z)$, $\ii{existrel}(u,v)$, $\ii{inf}_{x}(u, \underline{x})$, $\ii{sup}_{x}(u, \overline{x})$, $\ii{inf}_{y}(u, \underline{y})$, $\ii{sup}_{y}(u, \overline{y})$, $\ii{inf}_{z}(u, \underline{z})$, $\ii{sup}_{z}(u, \overline{z})$, $\ii{xocc}(u,x)$, $\ii{yocc}(u,y)$, $\ii{zocc}(u,z)$ atoms.
The bottom part is $\Pi^{1,e}_{m,n,p}$ and the top part is $(\ref{eq:c1-3d}) \cup (\ref{eq:c2-3d})$.
An answer set $Y_{8}$ for the top part evaluated with respect to an answer set $Y_{7}$ of the bottom part indicates whether the instantiation $A$ violates conditions (C1),(C2) with $\ii{violated}(u,v)$ atoms and $Y_{7} \cup Y_{8}$ is an answer set for $\Pi^{1,f}_{m,n,p}$.

The rule $(\ref{eq:violated})$ prohibits 3D CDC constraints in (C1) and (C2) to be violated for any tile $\ii{rel}(u,v,R)$.
By adding rule $(\ref{eq:violated})$ into $\Pi^{1,f}_{m,n,p}$, the answer sets of $\Pi^{1,f}_{m,n,p}$ that do not satisfy 3D CDC constraints for $\ii{rel}(u,v,R)$ are eliminated.
Answer sets of the subprogram $\Pi^{1,g}_{m,n,p} \eqs \Pi^{1,f}_{m,n,p} \cup (\ref{eq:violated})$ represent 3D-nCDC constraints in $C$, a possible instantiation $A=(a_{i})^{l}_{i=1}$ of variables in $V$ that satisfy conditions (C1), (C2) and the minimum bounding box of the instantiated objects.

Note that $\Pi^{1}_{m,n,p} \eqs \Pi^{1,g}_{m,n,p} \eqs \Pi^{1,f}_{m,n,p} \cup (\ref{eq:violated})$.
If $Z$ is an answer set for $\Pi^{1}_{m,n,p}$,  $Z \cap \mathcal{O}_{m,n,p}$ characterize an instantiation $A$ of objects in $D_{m,n,p}$ to variables in $V$ that satisfy 3D-nCDC constraints in $C$.
Then, $X$ is a solution of $I_{m,n,p}$ if and only if $X$ can be characterized as $Z \cap \mathcal{O}_{m,n,p}$ for some answer set $Z$ of $\Pi^{1}_{m,n,p}$.

\paragraph{\bf Uniqueness proof}
To prove uniqueness of representation, suppose that another answer set $Z'$ for $\Pi^{1}_{m,n,p}$ also characterizes $X$ and $Z'\neq Z$.
$Z'$ must include precisely the same $\ii{occ}(u,x,y,z)$ atoms as $Z$, otherwise $Z'$ does not characterize $X$. Consequently projected coordinates $\ii{xocc}(u,x)$, $\ii{yocc}(u,y)$, $\ii{zocc}(u,z)$ atoms are the same for $Z'$ and $Z$. Because minimum bounding box of an object is unique, $\ii{inf}_{x}(u, \underline{x})$, $\ii{sup}_{x}(u, \overline{x})$, $\ii{inf}_{y}(u, \underline{y})$, $\ii{sup}_{y}(u, \overline{y})$, $\ii{inf}_{z}(u, \underline{z})$, $\ii{sup}_{z}(u, \overline{z})$ atoms in $Z$ and $Z'$ are also identical. Since all atoms in the two sets coincide, $Z'\eqs Z$.

\end{proof}

%%%%%%%%%%%%%%%%%%%%%%%%%%%%%%%%%%%%%%%%%%%%%%%%%%%%%%%%%

\section{Proof of Theorem~\ref{th:asp-correct-disj-3d}}

%
%\begin{theorem}\label{th:asp-correct-disj-3d}
%	Let $I_{m,n,p} \eqs (C,V,D_{m,n,p},Q)$ be a discretized consistency problem, where $C$ contains basic or disjunctive 3D-nCDC constraints.
%	For an assignment $X$ of spatial objects in $D_{m,n,p}$ to variables in $V$, $X$ is a solution of $I_{m,n,p}$ if and only if $X$ can be represented in the form of $X \eqs Z \cap \mathcal{O}_{m,n,p}$ for some answer set $Z$ of $\Pi_{m,n,p}^{2}$.
%	Moreover, every solution of $I_{m,n,p}$ can be represented in this form in only one way.
%\end{theorem}
%
%
%\vspace{0.18in}

%\begin{proof}[Proof of Theorem~\ref{th:asp-correct-disj-3d}]
%\label{prf:asp-correct-disj-3d-a}
The proof is similar to the proof of Theorem~\ref{th:thm-correct-basic-3d} and consists of two parts: Correctness proof (considering the rules for the input network, the rules for generating minimum bounding box and instantiation of objects, the rules for 3D-nCDC constraints) and uniqueness proof.

\paragraph{\bf Correctness proof}

\medskip\noindent{\underline{Rules for the Input Network:}}
The answer set for the program~$(\ref{eq:cdc-disj-rel-3d})$ characterizes the disjunctive 3D-nCDC constraints in $C$.
Consider the subprogram $\Pi^{2,a}_{m,n,p}$ which consists of the set $F_V$ of facts in $(\ref{eq:cdc-disj-rel-3d})$ and the rules $(\ref{eq:disj-choose-3d})$, $(\ref{eq:rel-disj-3d})$ copied below:
%$(\ref{eq:disj-choose-3d})$ and the rules %(\ref{eq:rel-disj-3d})$.
\begin{align}
1\{\ii{chosen}(u,v,i): 1\leq i \leq o \}1 \lar  \nonumber \\ %\label{eq:disj-choose-3d} \\
\ii{rel}(u,v,R) \lar \ii{chosen}(u,v,i), \: \ii{disjrel}(u,v,i,R).  \nonumber  %  \label{eq:rel-disj-3d}
\end{align}
We apply the splitting set theorem~\cite{Erdogan2004} to $\Pi^{2,a}_{m,n,p}$:
The set of all possible $\ii{chosen}(u,v,i)$ atoms and the disjunctive 3D-nCDC constraints in $C$ is a splitting set for $\Pi^{2,a}_{m,n,p}$.
The bottom part is $(\ref{eq:cdc-disj-rel-3d}) \cup (\ref{eq:disj-choose-3d})$ and an answer set $Y_1$ for the bottom part consists of the set $F_V$ of facts $(\ref{eq:cdc-disj-rel-3d})$ that describe disjunctive 3D-nCDC constraints in $C$ and the index of the chosen disjunct from each disjunctive constraint.
An answer set $Y_2$ for the top part $(\ref{eq:rel-disj-3d})$ evaluated with respect to $Y_1$ specifies the chosen basic relation with $\ii{rel}(u,v,r)$ atoms; and $Y_1\cup Y_2$ is an answer set for $\Pi^{2,a}_{m,n,p}$.

The set $F_B \cup F_V$ of facts in $(\ref{eq:rel-3d}) \cup (\ref{eq:cdc-disj-rel-3d})$ represents all 3D-nCDC constraints in the input network $C$.
Hence, an answer set of the subprogram
$\Pi^{2,b}_{m,n,p}$ composed of $\Pi^{2,a}_{m,n,p}$ with the facts in $(\ref{eq:rel-3d})$ and the rule $(\ref{eq:existrel-3d}) $ represents a basic 3D-nCDC network $\hat{C}$ formed by all basic constraints in $C$ and picking precisely one disjunct from each disjunctive constraint in $C$.
$\ii{existrel}(u,v)$ atoms in the answer set of $\Pi^{2,b}_{m,n,p}$ indicate pair of variables for which a constraint exists in $\hat{C}$.

\medskip\noindent{\underline{Rules for MBB and Instantiation of Objects:}}
Next we examine the subprogram $\Pi^{2,c}_{m,n,p}$ formed by combining $\Pi^{2,b}_{m,n,p}$ with the rule $(\ref{eq:generate-infsup-3d})$ and rules analogous to $(\ref{eq:generate-infsup-3d})$ that describe $\ii{inf}_{y}(u, \underline{y})$, $\ii{sup}_{y}(u, \overline{y})$, $\ii{inf}_{z}(u, \underline{z})$, $\ii{sup}_{z}(u, \overline{z})$.
We apply the splitting set theorem to $\Pi^{2,c}_{m,n,p}$:
The set of all possible $\ii{chosen}(u,v,i)$, $\ii{rel}(u,v,r)$, $\ii{disjrel}(u,v,i,r)$, $\ii{existrel}(u,v)$ atoms is a splitting set for $\Pi^{2,c}_{m,n,p}$.
The bottom part of $\Pi^{2,c}_{m,n,p}$ is $\Pi^{2,b}_{m,n,p}$ and the top part is the rule $(\ref{eq:generate-infsup-3d})$ and rules analogous to $(\ref{eq:generate-infsup-3d})$.
An answer set $Y_3$ for the bottom part specifies a basic 3D-nCDC network $\hat{C}$ derived from $C$. An answer set $Y_4$ for the top part evaluated with respect to $Y_3$ describes a possible choice of minimum bounding box of each spatial variable in terms of $\ii{inf}_{x}(u, \underline{x})$, $\ii{sup}_{x}(u, \overline{x})$, $\ii{inf}_{y}(u, \underline{y})$, $\ii{sup}_{y}(u, \overline{y})$, $\ii{inf}_{z}(u, \underline{z})$, $\ii{sup}_{z}(u, \overline{z})$ atoms; and $Y_3 \cup Y_4$ is an answer set for $\Pi^{2,c}_{m,n,p}$.

The rule $(\ref{eq:infsup-ineq-3d})$ and the rules analogous to $(\ref{eq:infsup-ineq-3d})$ for $\ii{inf}_{y}(u, \underline{y})$, $\ii{sup}_{y}(u, \overline{y})$, $\ii{inf}_{z}(u, \underline{z})$, $\ii{sup}_{z}(u, \overline{z})$ ensure the chosen infimum value to be less than or equal to the supremum on each axis.
Proposition~2 of~\cite{Erdogan2004} implies that adding rule $(\ref{eq:infsup-ineq-3d})$ and analogous rules, the answer sets for $\Pi^{2,c}_{m,n,p}$ that do not have a valid minimum bounding box of an object are eliminated.
Thereby, an answer set of subprogram
$\Pi^{2,d}_{m,n,p}$ which is composed of $\Pi^{2,c}_{m,n,p}$, the rule $(\ref{eq:infsup-ineq-3d})$ and analogous rules to $(\ref{eq:infsup-ineq-3d})$ represent a basic 3D-nCDC network $\hat{C}$ derived from $C$ and a valid instantiation of $\ii{inf}_{x}(u, \underline{x})$, $\ii{sup}_{x}(u, \overline{x})$, $\ii{inf}_{y}(u, \underline{y})$, $\ii{sup}_{y}(u, \overline{y})$, $\ii{inf}_{z}(u, \underline{z})$, $\ii{sup}_{z}(u, \overline{z})$ atoms.

Now we examine the subprogram $\Pi^{2,e}_{m,n,p} \eqs \Pi^{2,d}_{m,n,p} \cup (\ref{eq:generate-3d})$.
According to the splitting set theorem, 3D-nCDC constraints in $C$ and the set of all $\ii{chosen}(u,v,i)$, $\ii{rel}(u,v,r)$, $\ii{disjrel}(u,v,i,r)$, $\ii{existrel}(u,v)$, $\ii{inf}_{x}(u, \underline{x})$, $\ii{sup}_{x}(u, \overline{x})$, $\ii{inf}_{y}(u, \underline{y})$, $\ii{sup}_{y}(u, \overline{y})$, $\ii{inf}_{z}(u, \underline{z})$, $\ii{sup}_{z}(u, \overline{z})$ atoms is a splitting set for $\Pi^{2,e}_{m,n,p}$.
The bottom part of $\Pi^{2,e}_{m,n,p}$ is $\Pi^{2,d}_{m,n,p}$ and an answer set $Y_5$ for the bottom part describes a basic 3D-nCDC network $\hat{C}$ derived from $C$ and a valid choice of minimum bounding box for every spatial variable.
An answer set $Y_6$ for the top part $(\ref{eq:generate-3d})$ evaluated with respect to $Y_5$ describes a possible instantiation $A=(a_{i})^{l}_{i=1}$ of objects to variables in $V$
and $Y_5 \cup Y_6$ is an answer set for $\Pi^{2,e}_{m,n,p}$.

In the next step, we add rules of the form $(\ref{eq:project-mp-3d})$ and rules analogous to $(\ref{eq:project-mp-3d})$ for $y,z$ axes into $\Pi^{2,e}_{m,n,p}$ to form subprogram $\Pi^{2,f}_{m,n,p}$.
The set of all $\ii{occ}(u,x,y,z)$, $\ii{chosen}(u,v,i)$, $\ii{rel}(u,v,r)$, $\ii{disjrel}(u,v,i,r)$, $\ii{existrel}(u,v)$, $\ii{inf}_{x}(u, \underline{x})$, $\ii{sup}_{x}(u, \overline{x})$, $\ii{inf}_{y}(u, \underline{y})$, $\ii{sup}_{y}(u, \overline{y})$, $\ii{inf}_{z}(u, \underline{z})$, $\ii{sup}_{z}(u, \overline{z})$ atoms and 3D-nCDC constraints in $C$ is a splitting set for $\Pi^{2,f}_{m,n,p}$.
The bottom part of $\Pi^{2,f}_{m,n,p}$ is $\Pi^{2,e}_{m,n,p}$ and an answer set $Y_7$ for the bottom part describes a basic 3D-nCDC network $\hat{C}$ derived from $C$, a valid choice of minimum bounding box for every variable $u\in V$ and an instantiation of objects to every variable.
The top part of $\Pi^{2,f}_{m,n,p}$ is the rule $(\ref{eq:project-mp-3d})$ and rules analogous to $(\ref{eq:project-mp-3d})$ for $\ii{yocc}(u,y)$, $\ii{zocc}(u,z)$ atoms.
An answer set $Y_8$ for the top part evaluated with respect to $Y_7$ indicates the projection of each generated object over $x,y,z$ axes with $\ii{xocc}(u,x)$, $\ii{yocc}(u,y)$, $\ii{zocc}(u,z)$ atoms; and $Y_7 \cup Y_8$ is an answer set for $\Pi^{2,f}_{m,n,p}$.

The rule $(\ref{eq:cells-inside-mbr-3d})$ and analogous rules for $y,z$ axes insist that cells of every object are generated inside its minimum bounding box. The rule $(\ref{eq:infsup-existcell-3d})$ and analogous rules insist that at least one cell has been generated on the infimum and supremum over every axes to make sure that correct values have been chosen.
Adding rules $(\ref{eq:cells-inside-mbr-3d}),(\ref{eq:infsup-existcell-3d})$ and analogous rules for $y,z$ axes into $\Pi^{2,f}_{m,n,p}$ eliminates answer sets of $\Pi^{2,f}_{m,n,p}$ that do not obey these criteria.
Thus, we form the subprogram $\Pi^{2,g}_{m,n,p}$ which consists of $\Pi^{2,f}_{m,n,p}$, the rules $(\ref{eq:cells-inside-mbr-3d})$, $(\ref{eq:infsup-existcell-3d})$ and rules analogous to $(\ref{eq:cells-inside-mbr-3d})$, $(\ref{eq:infsup-existcell-3d})$ for $\ii{inf}_{y}(u, \underline{y})$, $\ii{sup}_{y}(u, \overline{y})$, $\ii{inf}_{z}(u, \underline{z})$, $\ii{sup}_{z}(u, \overline{z})$.
An answer set $Y_9$ of subprogram $\Pi^{2,g}_{m,n,p}$ represents a basic 3D-nCDC network $\hat{C}$ derived from $C$, a possible instantiation $A=(a_{i})^{l}_{i=1}$ of objects to variables in $V$ and correct bounds $\ii{inf}_{x}(u, \underline{x})$, $\ii{sup}_{x}(u, \overline{x})$, $\ii{inf}_{y}(u, \underline{y})$, $\ii{sup}_{y}(u, \overline{y})$, $\ii{inf}_{z}(u, \underline{z})$, $\ii{sup}_{z}(u, \overline{z})$ for objects.

\medskip\noindent{\underline{Rules for 3D-nCDC Constraints:}}
Rules $(\ref{eq:c1-3d}),(\ref{eq:c2-3d})$ find out whether the instantiation of objects $A$ violates 3D CDC tile constraints (C1), (C2) respectively.
We consider the subprogram $\Pi^{2,h}_{m,n,p} \eqs \Pi^{2,g}_{m,n,p} \cup (\ref{eq:c1-3d}) \cup (\ref{eq:c2-3d})$.
A splitting set for $\Pi^{2,h}_{m,n,p}$ is
the set of all $\ii{occ}(u,x,y,z)$, $\ii{chosen}(u,v,i)$, $\ii{rel}(u,v,r)$, $\ii{disjrel}(u,v,i,r)$, $\ii{existrel}(u,v)$, $\ii{inf}_{x}(u, \underline{x})$, $\ii{sup}_{x}(u, \overline{x})$, $\ii{inf}_{y}(u, \underline{y})$, $\ii{sup}_{y}(u, \overline{y})$, $\ii{inf}_{z}(u, \underline{z})$, $\ii{sup}_{z}(u, \overline{z})$, $\ii{xocc}(u,x)$, $\ii{yocc}(u,y)$, $\ii{zocc}(u,z)$ atoms.
The bottom part is $\Pi^{2,g}_{m,n,p}$ and the top part is $(\ref{eq:c1-3d}) \cup (\ref{eq:c2-3d})$.
An answer set $Y_{10}$ for the top part evaluated with respect to an answer set $Y_{9}$ for the bottom part indicates whether the instantiation $A$ violates conditions (C1), (C2) with $\ii{violated}(u,v)$ atoms and $Y_{9} \cup Y_{10}$ is an answer set for $\Pi^{2,h}_{m,n,p}$.

The rule $(\ref{eq:violated})$ prohibits 3D CDC constraints in (C1), (C2) to be violated for any tile $\ii{rel}(u,v,R)$.
By inserting rule $(\ref{eq:violated})$ into $\Pi^{2,h}_{m,n,p}$, the answer sets of $\Pi^{2,h}_{m,n,p}$ that do not satisfy conditions (C1), (C2) are eliminated.
An answer set of the subprogram $\Pi^{2,i}_{m,n,p} \eqs \Pi^{2,h}_{m,n,p} \cup (\ref{eq:violated})$ represents a basic 3D-nCDC network $\hat{C}$ derived from $C$, a possible instantiation $A=(a_{i})^{l}_{i=1}$ of variables in $V$ that satisfy conditions (C1), (C2) for $\hat{C}$ and the minimum bounding box of the instantiated objects.

Note that $\Pi^{2}_{m,n,p} \eqs \Pi^{2,i}_{m,n,p} \eqs \Pi^{2,h}_{m,n,p} \cup (\ref{eq:violated})$.
If $Z$ is an answer set for $\Pi^{2}_{m,n,p}$, $ \: Z \cap \mathcal{O}_{m,n,p}$ characterize an instantiation $A$ of objects in $D_{m,n,p}$ to variables in $V$ that satisfy basic 3D-nCDC constraints in $\hat{C}$. This means $A$ satisfies basic and disjunctive 3D-nCDC constraints in $C$.
Then, $X$ is a solution of $I_{m,n,p}$ if and only if $X$ can be characterized as $Z \cap \mathcal{O}_{m,n,p}$ for some answer set $Z$ of $\Pi^{2}_{m,n,p}$.

\paragraph{\bf Uniqueness proof}
To prove second part of the theorem, suppose that another answer set $Z'$ for $\Pi^{2}_{m,n,p}$ also characterizes $X$ and $Z'\neq Z$.
By assumption $Z'$ includes the same $\ii{occ}(u,x,y,z)$ atoms as $Z$. Consequently projected coordinates $\ii{xocc}(u,x)$, $\ii{yocc}(u,y)$, $\ii{zocc}(u,z)$ atoms are the same for $Z'$ and $Z$. Minimum bounding box of an object is unique hence $\ii{inf}_{x}(u, \underline{x})$, $\ii{sup}_{x}(u, \overline{x})$, $\ii{inf}_{y}(u, \underline{y})$, $\ii{sup}_{y}(u, \overline{y})$, $\ii{inf}_{z}(u, \underline{z})$, $\ii{sup}_{z}(u, \overline{z})$ atoms are also identical. A pair of objects satisfies only one basic 3D-nCDC relation so the chosen disjuncts from every disjunctive constraints in $C$ must be the same in $Z$ and $Z'$. Consequently, $\ii{chosen}(u,v,i)$ atoms coincide in $Z$ and $Z'$. Since all atoms in the two sets are identical, we obtain $Z'\eqs Z$.

%\end{proof}

%\vspace{-2ex}

%%%%%%%%%%%%%%%%%%%%%%%%%%%%%%%%%%%%%%%%%%%%%%%%%%%%%%%%%

\label{lastpage}
\end{document}